\newtheorem{proposition}{Proposition}[section]
\newtheorem{lemma}{Lemma}[section]
\definecolor{Gray}{gray}{0.9}
\newcommand{\CC}{\cellcolor{Gray}}
\newcommand{\policy}{\pi}
\newcommand{\mdp}{\mathcal{M}}
\newcommand{\states}{\mathcal{S}}
\newcommand{\actions}{\mathcal{A}}
\newcommand{\behavior}{{\pi_\beta}}
\newcommand{\bellman}{\mathcal{B}}
\newcommand{\bx}{\mathbf{x}}
\newcommand{\bs}{\mathbf{s}}
\newcommand{\ba}{\mathbf{a}}
\newcommand{\methodname}{CDS}
\newcommand{\arxiv}[1] {{\color{black} #1}}
\titlespacing\section{0pt}{0pt plus 2pt minus 2pt}{0pt plus 2pt minus 2pt}
\titlespacing\subsection{0pt}{3pt plus 4pt minus 2pt}{0pt plus 2pt minus 2pt}
\titlespacing\subsubsection{0pt}{3pplus 4pt minus 2pt}{0pt plus 2pt minus 2pt}
\title{Conservative Data Sharing \\ for Multi-Task Offline Reinforcement Learning}
\author{Tianhe Yu$^{*, 1, 2}$, Aviral Kumar$^{*, 2, 3}$, Yevgen Chebotar$^{2}$, Karol Hausman$^{1,2}$, \vspace{0.05cm}\\ \textbf{Sergey Levine$^{2,3}$, Chelsea Finn$^{1,2}$} \vspace{0.1cm}\\
$^1$Stanford University, $^2$Google Research, $^3$UC Berkeley~~~~~~~~ ($^*$Equal Contribution) \vspace{0.1cm}\\ 
\texttt{tianheyu@cs.stanford.edu, aviralk@berkeley.edu}
}
\begin{document}

\maketitle

\begin{abstract}
Offline reinforcement learning (RL) algorithms have shown promising results in domains where abundant pre-collected data is available. However, prior methods focus on solving individual problems from scratch with an offline dataset without considering how an offline RL agent can acquire multiple skills. \arxiv{We argue that a natural use case of offline RL is in settings where we can pool large amounts of data collected in various scenarios for solving different tasks, and utilize all of this data to learn behaviors for all the tasks more effectively rather than training each one in isolation. However, sharing data across all tasks in multi-task offline RL performs surprisingly poorly in practice.
Thorough empirical analysis, we find that sharing data can actually exacerbate the distributional shift between the learned policy and the dataset, which in turn can lead to divergence of the learned policy and poor performance.}
To address this challenge, we develop a simple technique for data-sharing in multi-task offline RL that routes data based on the improvement over the task-specific data. We call this approach conservative data sharing (\methodname), and it can be applied with multiple single-task offline RL methods.
On a range of challenging multi-task locomotion, navigation, and vision-based robotic manipulation problems, \methodname\ achieves the best or comparable performance compared to prior offline multi-task RL methods and previous data sharing approaches.  
\end{abstract}

\section{Introduction}

Recent advances in offline reinforcement learning (RL) make it possible to train policies for real-world scenarios, such as robotics~\citep{kalashnikov2018scalable,Rafailov2020LOMPO, kalashnikov2021mt} and healthcare~\citep{guez2008adaptive,shortreed2011informing,killian2020empirical}, entirely from previously collected data. Many realistic settings where we might want to apply offline RL are inherently \emph{multi-task} problems, where we want to solve multiple tasks using all of the data available. For example, if our goal is to enable robots to acquire a range of different behaviors, it is more practical to collect a modest amount of data for each desired behavior, resulting in a large but heterogeneous dataset, rather than requiring a large dataset for every individual skill. Indeed, many existing datasets in robotics~\citep{finn2017deep,dasari2020robonet,sharma2018multiple} and offline RL~\citep{fu2020d4rl} include data collected in precisely this way. Unfortunately, leveraging such heterogeneous datasets leaves us with two unenviable choices. We could train each task only on data collected for that task, but such small datasets may be inadequate for good performance. Alternatively, we could combine all of the data together \arxiv{and use data relabeled from other tasks to improve offline training}, but this na\"{i}ve data sharing approach can actually often degrade performance over simple single-task training in practice~\citep{kalashnikov2021mt}. 
In this paper, we aim to understand how data sharing affects RL performance in the offline setting and develop a reliable and effective method for selectively sharing data across tasks.

A number of prior works have studied multi-task RL in the \emph{online} setting, confirming that multi-tasking can often lead to performance that is worse than training tasks individually~\cite{parisotto2015actor,rusu2015policy,yu2020metaworld}. These prior works focus on mitigating optimization challenges that are aggravated by the online data generation process~\cite{schaul2019ray,yu2020gradient,yang2020multi}. As we will find in Section~\ref{sec:analysis}, multi-task RL remains a challenging problem in the offline setting when sharing data across tasks, even when exploration is not an issue. While prior works have developed heuristic methods for reweighting and relabeling data~\citep{andrychowicz2017hindsight,eysenbach2020rewriting, li2020generalized,kalashnikov2021mt}, they do not yet provide a principled explanation for why data sharing can hurt performance in the offline setting, nor do they provide a robust and general approach for selective data sharing that alleviates these issues while preserving the efficiency benefits of sharing experience across tasks.

In this paper, we hypothesize that data sharing can be harmful or brittle in the offline setting because it can exacerbate the distribution shift between the policy represented in the data and the policy being learned. %
We analyze the effect of data sharing in the offline multi-task RL setting, and present evidence to support this hypothesis.
Based on this analysis, we then propose an approach for selective data sharing that aims to minimize distributional shift, by sharing only data that is particularly relevant to each task. Instantiating a method based on this principle requires some care, since we do not know a priori which data is most relevant for a given task before we've learned a good policy for that task.
\arxiv{To provide a practical instantiation, we propose the conservative data sharing (CDS) algorithm. CDS reduces distributional shift by sharing data based on a learned conservative estimate of the Q-values that penalizes Q-values on out-of-distribution actions. Specifically, CDS relabels transitions when the conservative Q-value of the added transitions exceeds the expected conservative Q-values on the target task data. We visualize how \methodname\ works in Figure~\ref{fig:teaser}.}

\begin{wrapfigure}{r}{7cm}
    \vspace{-0.3cm}
    \centering
    \includegraphics[width=0.45\textwidth]{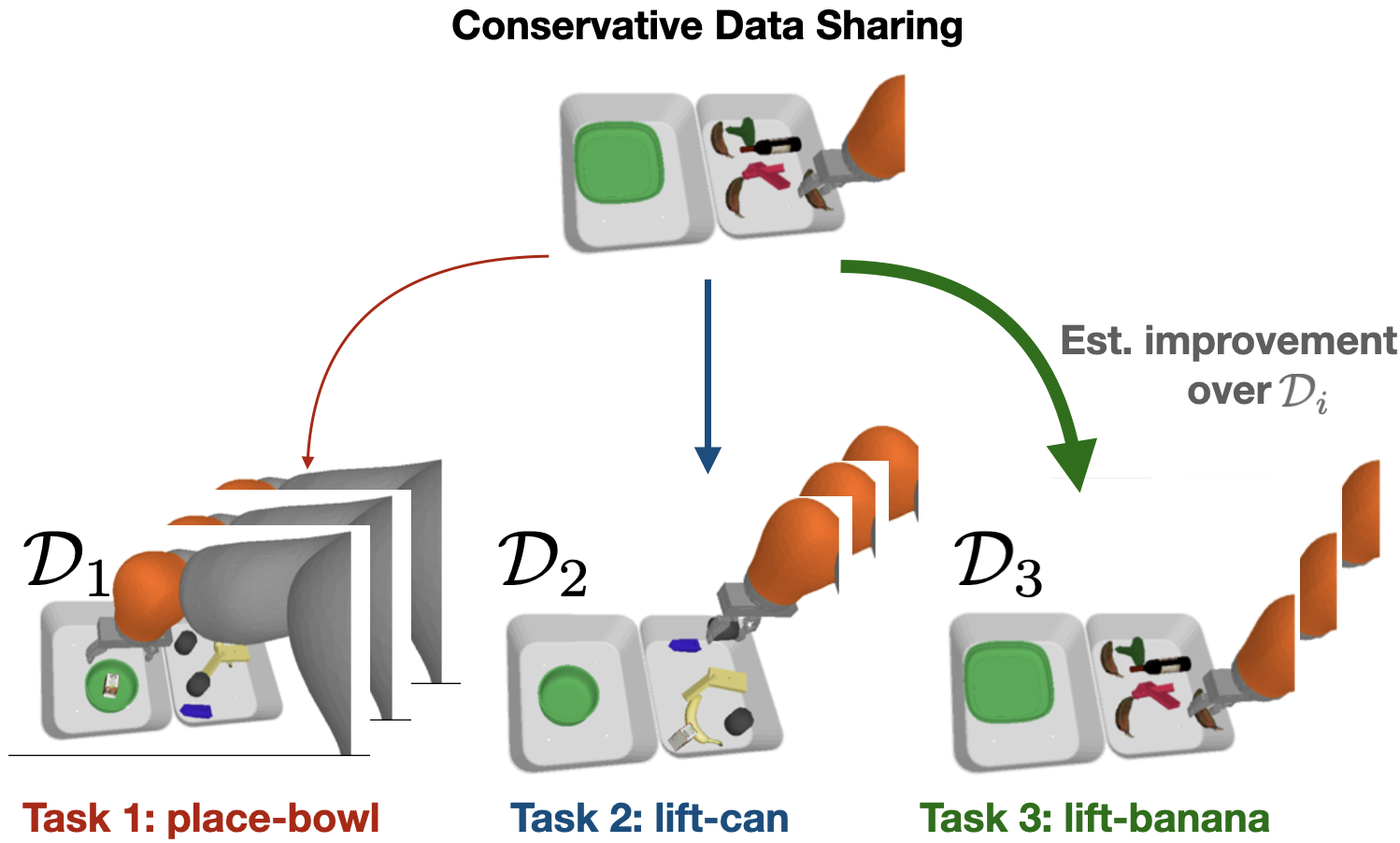}
    \vspace{-0.2cm}
    \caption{\footnotesize A visualization of \methodname, which routes a transition to
    the offline dataset $\mathcal{D}_i$ for each task $i$ with a weight based on the estimated improvement over the behavior policy $\pi_\beta(\ba|\bs, i)$ of $\mathcal{D}_i$ after sharing the transition.}
    \label{fig:teaser}
    \vspace{-0.3cm}
\end{wrapfigure}

The main contributions of this work are an analysis of data sharing in offline multi-task RL and a new algorithm, \textit{conservative data sharing} (\methodname), for multi-task offline RL problems. \arxiv{\methodname\ relabels a transition into a given task only when it is expected to improve performance based on a conservative estimate of the Q-function.} 
After data sharing, similarly to prior offline RL methods, \methodname\ applies a standard conservative offline RL algorithm, such as CQL~\citep{kumar2020conservative}, that learns a conservative value function or BRAC~\citep{wu2019behavior}, a policy-constraint offline RL algorithm.
Further, we theoretically analyze \methodname\ and characterize scenarios under which it provides safe policy improvement guarantees.
Finally, we conduct extensive empirical analysis of \methodname\ on multi-task locomotion, multi-task robotic manipulation with sparse rewards, multi-task navigation, and multi-task imaged-based robotic manipulation. We compare CDS to vanilla offline multi-task RL without sharing data, to na\"{i}vely sharing data for all tasks, and to existing data relabeling schemes for multi-task RL. \methodname\ is the only method to attain good performance across all of these benchmarks, often significantly outperforming the best \textit{domain-specific} method, improving over the next best method on each domain by \textbf{17.5\%} on average.

\section{Related Work}

\textbf{Offline RL.} Offline RL~\citep{ernst2005tree, riedmiller2005neural, LangeGR12, levine2020offline} has shown promise in domains such as robotic manipulation~\citep{kalashnikov2018scalable, mandlekar2020iris, Rafailov2020LOMPO,singh2020cog,kalashnikov2021mt}, NLP~\citep{jaques2019way,jaques2020human}, recommender systems \& advertising~\citep{strehl2010learning,garcin2014offline,charles2013counterfactual,theocharous2015ad,thomas2017predictive}, and healthcare~\citep{shortreed2011informing, Wang2018SupervisedRL}. The major challenge in offline RL is distribution shift~\citep{fujimoto2018off,kumar2019stabilizing,kumar2020conservative}, where the learned policy might generate out-of-distribution actions, resulting in erroneous value backups. Prior offline RL methods address this issue by regularizing the learned policy to be ``close`` to the behavior policy~\citep{fujimoto2018off,liu2020provably,jaques2019way,wu2019behavior, zhou2020plas,kumar2019stabilizing,siegel2020keep, peng2019advantage}, through variants of importance sampling~\citep{precup2001off, sutton2016emphatic, LiuSAB19, SwaminathanJ15, nachum2019algaedice}, via uncertainty quantification on Q-values~\citep{agarwal2020optimistic, kumar2019stabilizing, wu2019behavior, levine2020offline}, by learning conservative Q-functions~\citep{kumar2020conservative,kostrikov2021offline}, and with model-based training with a penalty on out-of-distribution states~\citep{kidambi2020morel, yu2020mopo,matsushima2020deployment,argenson2020model,swazinna2020overcoming,Rafailov2020LOMPO,lee2021representation,yu2021combo}. While current benchmarks in offline RL~\citep{fu2020d4rl,gulcehre2020rl} contain datasets that involve multi-task structure, existing offline RL methods do not leverage the shared structure of multiple tasks and instead train each individual task from scratch. In this paper, we exploit the shared structure in the offline multi-task setting and train a general policy that can acquire multiple skills.

\textbf{Multi-task RL algorithms.} Multi-task RL algorithms~\citep{wilson2007multi,parisotto2015actor,teh2017distral,espeholt2018impala,hessel2019popart,yu2020gradient, xu2020knowledge, yang2020multi, kalashnikov2021mt,sodhani2021multi}
focus on solving multiple tasks jointly in an efficient way. While multi-task RL methods seem to provide a promising way to build general-purpose agents~\citep{kalashnikov2021mt}, prior works have observed major challenges in multi-task RL, in particular, the optimization challenge~\citep{hessel2019popart,schaul2019ray,yu2020gradient}.
Beyond the optimization challenge, how to perform effective representation learning via weight sharing is another major challenge in multi-task RL. Prior works have considered distilling per-task policies into a single policy that solves all tasks~\citep{rusu2015policy,teh2017distral,ghosh2017divide,xu2020knowledge}, separate shared and task-specific modules with theoretical guarantees~\citep{d2019sharing}, and incorporating additional supervision~\citep{sodhani2021multi}. Finally, sharing data across tasks emerges as a challenge in multi-task RL, especially in the off-policy setting, as na\"{i}vely sharing data across all tasks turns out to hurt performance in certain scenarios~\citep{kalashnikov2021mt}. Unlike most of these prior works, we focus on the offline setting where the challenges in data sharing are most relevant. Methods that study optimization and representation learning issues are complementary and can be readily combined with our approach.

\textbf{Data sharing in multi-task RL.} Prior works~\citep{andrychowicz2017hindsight,kaelbling1993learning,pong2018temporal,schaul2015universal,eysenbach2020rewriting,li2020generalized,kalashnikov2021mt,chebotar2021actionable} have found it effective to reuse data across tasks by recomputing the rewards of data collected for one task and using such relabeled data for other tasks, which effectively augments the amount of data available for learning each task and boosts performance. These methods perform relabeling either uniformly~\citep{kalashnikov2021mt} or based on metrics such as estimated Q-values~\citep{eysenbach2020rewriting,li2020generalized}, domain knowledge~\citep{kalashnikov2021mt}, the distance to states or images in goal-conditioned settings~\citep{andrychowicz2017hindsight,pong2018temporal,nair2018visual,liu2019competitive,sun2019policy,lin2019reinforcement,huang2019mapping,lynch2020grounding,yang2021bias,chebotar2021actionable}, \arxiv{and metric learning for robust inference in the offline meta-RL setting~\citep{li2019multi}. All of these methods either require online data collection and do not consider data sharing in a fully offline setting, or only consider offline goal-conditioned or meta-RL problems~\citep{chebotar2021actionable,li2019multi}.} \arxiv{While these prior works empirically find that data sharing helps, we believe that our analysis in Section~\ref{sec:analysis} provides the first analytical understanding of why and when data sharing can help in multi-task offline RL and why it hurts in some cases.} 
\arxiv{Specifically, our analysis reveals the effect of distributional shift introduced during data sharing, which is not taken into account by these prior works. Our proposed approach, CDS, tackles the challenge of distributional shift in data sharing by intelligently sharing data across tasks and improves multi-task performance by effectively trading off between the benefits of data sharing and the harms of excessive distributional shift.}

\vspace{-5pt}
\section{Preliminaries and Problem Statement}
\vspace{-5pt}
\label{sec:prelims}

\textbf{Multi-task offline RL.} The goal in multi-task RL is to find a policy that maximizes expected return in a multi-task Markov decision process (MDP), defined as   $\mdp =(\states, \actions, P, \gamma, \{R_i, i \}_{i=1}^N)$, with state space $\states$, action space $\actions$, dynamics $P(\bs' | \bs, \ba)$, a discount factor $\gamma \in [0, 1)$, and a finite set of task indices $1, \cdots, N$
with corresponding reward functions $R_1, \cdots, R_N$. Each task $i$ presents a different reward function $R_i$, but we assume that the dynamics $P$ are shared across tasks. While this setting is not fully general, there are a wide variety of practical problem settings for which only the reward changes including various goal navigation tasks~\cite{fu2020d4rl}, distinct object manipulation objectives~\cite{xie2018few}, and different user preferences~\cite{christiano2017deep}.
In this work, we focus on learning a policy $\pi(\ba|\bs, i)$, which in practice could be modelled as independent policies $\{\pi_1(\ba|\bs), \cdots, \pi_N(\ba|\bs)\}$ that do not share any parameters, or as a single task-conditioned policy, $\pi(\ba|\bs, i)$ with parameter sharing. Our goal in this paper is to analyze and devise methods for data sharing and the choice of parameter sharing is orthogonal, and can be made independently.
We formulate the policy optimization problem as finding a policy that maximizes expected return over all the tasks: $\pi^*(\ba|\bs, \cdot) := \arg \max_{\pi} \mathbb{E}_{i \sim [N]} \mathbb{E}_{\pi(\cdot|\cdot, i)}[\sum_{t} \gamma^t R_i(\bs_t, \ba_t)]$.
The Q-function, $Q^\pi(\bs, \ba, i)$, of a policy $\pi(\cdot|\cdot, i)$ is the long-term discounted reward obtained in task $i$ by executing action $\ba$ at state $\bs$ and following policy $\pi$ thereafter.

Standard offline RL is concerned with learning policies $\pi(\ba|\bs)$ using only a given static dataset of transitions $\mathcal{D} =  \{(\bs_j, \ba_j, \bs'_j, r_j)\}_{j=1}^N$, collected by a behavior policy $\pi_\beta(\ba|\bs)$, without any additional environment interaction. In the multi-task offline RL setting, the dataset $\mathcal{D}$ is partitioned into per-task subsets, $\mathcal{D} = \cup_{i=1}^N \mathcal{D}_i$,
where $\mathcal{D}_i$ consists of experience from task $i$. While algorithms can choose to train the policy for task $i$ (i.e., $\pi(\cdot|\cdot, i)$) only on $\mathcal{D}_{i}$, in this paper, we are interested in data-sharing schemes that correspond to relabeling data from a different task, $j \neq i$ with the reward function $r_i$, and learn $\pi(\cdot|\cdot, i)$ on the combined data. To be able to do so, we assume access to the functional form of the reward $r_i$, a common assumption in goal-conditioned RL~\cite{andrychowicz2017hindsight,eysenbach2020rewriting}, and which often holds in robotics applications through the use of learned classifiers~\cite{xie2018few,kalashnikov2018scalable}, and discriminators~\cite{fu2018variational,chen2021learning}.

We assume that relabeling data $\mathcal{D}_j$ from task $j$ to task $i$ generates a dataset $\mathcal{D}_{j \rightarrow i}$, which is then additionally used to train on task $i$. Thus, the effective dataset for task $i$ after relabeling is given by $\mathcal{D}^\mathrm{eff}_i := \mathcal{D}_i \cup \left( \cup_{j \neq i} \mathcal{D}_{j \rightarrow i} \right)$. This notation simply formalizes data sharing and relabeling strategies explored in prior work~\citep{eysenbach2020rewriting,kalashnikov2021mt}. Our aim in this paper will be to improve on this na\"{i}ve strategy, which we will show leads to significantly better results. 

\textbf{Offline RL algorithms.} A central challenge in offline RL is distributional shift: differences between the learned policy and the behavior policy can lead to erroneous target values, where the Q-function is queried at actions $\ba \sim \pi(\ba|\bs)$ that are far from the actions it is trained on, leading to massive overestimation~\citep{levine2020offline,kumar2019stabilizing}.
A number of offline RL algorithms use some kind of regularization on either the policy~\citep{kumar2019stabilizing,fujimoto2018off,wu2019behavior,jaques2019way,siegel2020keep,peng2019advantage} or on the learned Q-function~\citep{kumar2020conservative,kostrikov2021offline} to ensure that the learned policy does not deviate too far from the behavior policy. For our analysis in this work, we will abstract these algorithms into a generic constrained policy optimization problem~\citep{kumar2020conservative}:
\begin{equation}
\label{eqn:generic_offline_rl}
    \pi^*(\ba|\bs) := \arg \max_{\pi}~~ J_{\mathcal{D}}(\pi) - \alpha D(\pi, \pi_\beta).
\end{equation}
$J_{\mathcal{D}}(\pi)$ denotes the average return of policy $\pi$ in the empirical MDP induced by the transitions in the dataset, and $D(\pi, \pi_\beta)$ denotes a divergence measure (e.g., KL-divergence~\citep{jaques2019way,wu2019behavior}, MMD distance~\citep{kumar2019stabilizing} or $D_{\text{CQL}}$~\citep{kumar2020conservative}) between the learned policy $\pi$ and the behavior policy $\pi_\beta$. 
In the multi-task offline RL setting with data-sharing, the generic optimization problem in Equation~\ref{eqn:generic_offline_rl} for a task $i$ utilizes the effective dataset $\mathcal{D}^\mathrm{eff}_i$. In addition, we define $\pi_\beta^\mathrm{eff}(\ba|\bs, i)$ as the effective behavior policy for task $i$ and it is given by: $\pi_\beta^\mathrm{eff}(\ba|\bs, i) := |\mathcal{D}^\mathrm{eff}_i(\bs, \ba)| / |\mathcal{D}^\mathrm{eff}_i(\bs)|$. Hence, the counterpart of Equation~\ref{eqn:generic_offline_rl} in the multi-task offline RL setting with data sharing is given by:
\begin{equation}
\label{eqn:generic_multitask_offline_rl}
     \forall i \in [N], ~~\pi^*(\ba|\bs, i) := \arg \max_{\pi}~~ J_{\mathcal{D}^\mathrm{eff}_i}(\pi) - \alpha D(\pi, \pi^\mathrm{eff}_\beta).
\end{equation}
We will utilize this generic optimization problem to motivate our method in Section~\ref{sec:method}.

\section{When Does Data Sharing Actually Help in Offline Multi-Task RL?}
\label{sec:analysis}

Our goal is to leverage experience from all tasks to learn a policy for a particular task of interest. Perhaps the simplest approach to leveraging experience across tasks is to train the task policy on not just the data coming from that task, but also relabeled data from all other tasks~\citep{caruana1997multitask}.
Is this na\"ive data sharing strategy sufficient for learning effective behaviors from multi-task offline data? In this section, we aim to answer this question via empirical analysis on a relatively simple domain, which will reveal interesting aspects of data sharing. We first describe the experimental setup and then discuss the results and possible explanations for the observed behavior. Using insights obtained from this analysis, we will then derive a simple and effective data sharing strategy in Section~\ref{sec:method}.

\textbf{Experimental analysis setup.} To assess the efficacy of data sharing, we experimentally analyze various multi-task RL scenarios created with the walker2d environment in Gym~\citep{brockman2016openai}. We construct different test scenarios on this environment that mimic practical situations, including settings where different amounts of  data of varied quality are available for different tasks~\citep{kalashnikov2021mt,xie2019improvisation,singh2020parrot}. In all these scenarios, the agent attempts three tasks: \texttt{run forward}, \texttt{run backward}, and \texttt{jump}, which we visualize in Figure~\ref{fig:env}. Following the problem statement in Section~\ref{sec:prelims}, these tasks share the same state-action space and transition dynamics, differing only in the reward function that the agent is trying to optimize. 
Different scenarios are generated with varying size offline datasets, each collected with policies that have different degrees of suboptimality. This might include, for each task, a single policy with mediocre or expert performance, or a mixture of policies given by the initial part of the replay buffer trained with online SAC~\citep{haarnoja2018soft}. We refer to these three types of offline datasets as medium, expert and medium-replay, respectively, following \citet{fu2020d4rl}.

We train a single-task policy $\pi_\mathrm{CQL}(\ba|\bs, i)$ with CQL~\citep{kumar2020conservative} as the base offline RL method, along with two forms of data-sharing, as shown in Table~\ref{tab:analysis}: no sharing of data across tasks (\textbf{No Sharing)}) and complete sharing of data with relabeling across all tasks (\textbf{Sharing All}). In addition, we also measure the divergence term in Equation~\ref{eqn:generic_multitask_offline_rl}, $D(\pi(\cdot|\cdot, i), \pi^\mathrm{eff}_\beta(\cdot|\cdot, i))$, for $\pi = \pi_\mathrm{CQL}(\ba|\bs, i)$, averaged across tasks by using the
Kullback-Liebler divergence. This value quantifies the average divergence between the single-task optimal policy and the relabeled behavior policy averaged across tasks.  

\begin{table*}[t]
  \centering
  \scriptsize
  \def\arraystretch{0.9}
  \setlength{\tabcolsep}{0.42em}
  
\begin{tabularx}{0.7\linewidth}{cc|cc|cc}
  \toprule
 \multicolumn{1}{c}{\multirow{1.5}[2]{*}{Dataset types / Tasks}} & \multicolumn{1}{c}{\multirow{1.5}[2]{*}{Dataset Size}}\vline &
 \multicolumn{2}{c}{Avg Return}\vline & \multicolumn{2}{c}{$D_\text{KL}(\pi, \pi_\beta)$}\\
 & \multicolumn{1}{c}{}\vline & \multicolumn{1}{c}{\textbf{No Sharing}}  & \multicolumn{1}{c}{\textbf{Sharing All}}\vline  & \multicolumn{1}{c}{\textbf{No Sharing}}  & \multicolumn{1}{c}{\textbf{Sharing All}} \\
\midrule
  medium-replay / \texttt{run forward} & 109900 & \textbf{998.9} & 966.2 & \textbf{3.70} & 10.39\\
  medium-replay / \texttt{run backward} & 109980 & \textbf{1298.6} & 1147.5 & \textbf{4.55} & 12.70\\
  medium-replay / \texttt{jump} & 109511 & \textbf{1603.1} & 1224.7 & \textbf{3.57} & 15.89\\
  \rowcolor{Gray}
  \textbf{average task performance} & N/A & \textbf{1300.2} & 1112.8 & \textbf{3.94}  & 12.99\\
  \midrule
  medium / \texttt{run forward} & 27646 & 297.4  & \textbf{848.7} &\textbf{6.53} & 11.78\\
  medium / \texttt{run backward} & 31298 & 207.5 & \textbf{600.4}& \textbf{4.44} & 10.13\\
  medium / \texttt{jump} & 100000 & 351.1 & \textbf{776.1}& \textbf{5.57} & 21.27\\
\rowcolor{Gray}
  \textbf{average task performance} & N/A & 285.3 & \textbf{747.7} & \textbf{5.51} & 14.39 \\
  \midrule
  medium-replay / \texttt{run forward} & 109900 & 590.1 & \textbf{701.4}& \textbf{1.49} & 7.76\\
  medium / \texttt{run backward} & 31298 & 614.7 & \textbf{756.7}& \textbf{1.91} & 12.2\\
  \rowcolor{yellow}
  expert / \texttt{jump} & 5000 & \textbf{1575.2} & 885.1 & \textbf{3.12} & 27.5\\
\rowcolor{Gray}
  \textbf{average task performance} & N/A & \textbf{926.6} & 781 & \textbf{2.17}  & 15.82 \\
    \bottomrule
    \end{tabularx}
    \vspace{-0.1cm}
         \caption{\footnotesize We analyze how sharing data across all tasks (\textbf{Sharing All}) compares to \textbf{No Sharing} in the multi-task walker2d environment with three tasks: run forward, run backward, and jump. We provide three scenarios with different styles of per-task offline datasets in the leftmost column. The second column shows the number of transitions in each dataset. We report the per-task average return, the KL divergence between the single-task optimal policy $\pi$ and the behavior policy $\behavior$ after the data sharing scheme, as well as averages across tasks. \textbf{Sharing All} generally helps training while increasing the KL divergence. However, on the row highlighted in yellow, \textbf{Sharing All} yields a particularly large KL divergence between the single-task $\pi$ and $\behavior$ and degrades the performance, suggesting sharing data for all tasks is brittle.
     \label{tab:analysis}
     \vspace{-0.6cm}
     }
\end{table*}

\textbf{Analysis of results in Table~\ref{tab:analysis}.} To begin, note that even na\"ively sharing data is  better than not sharing any data at all on \textbf{5/9} tasks considered
(compare the performance across \textbf{No Sharing} and \textbf{Sharing All} in Table~\ref{tab:analysis}).
However, a closer look at Table~\ref{tab:analysis} suggests that data-sharing can significantly degrade performance on certain tasks, especially in scenarios where the amount of data available for the original task is limited, and where the distribution of this data is narrow.
For example, when using expert data for jumping in conjunction with more than 25 times as much lower-quality (mediocre \& random) data for running forward and backward, we find that the agent performs poorly on the jumping task despite access to near-optimal jumping data.

\textbf{\emph{Why does na\"ive data sharing degrade performance on certain tasks despite near-optimal behavior for these tasks in the original task dataset?}} We argue that the primary reason that na\"{i}ve data sharing can actually hurt performance in such cases is because it exacerbates the distributional shift issues that afflict offline RL. Many offline RL methods combat distribution shift by implicitly or explicitly constraining the learned policy to stay close to the training data. Then, when the training data is changed by adding relabeled data from another task, the constraint causes the learned policy to change as well. When the added data is of low quality for that task, it will correspondingly lead to a lower quality learned policy for that task, unless the constraint is somehow modified.
This effect is evident from the higher divergence values between the learned policy without any data-sharing and the effective behavior policy for that task \emph{after} relabeling (e.g., expert+\texttt{jump}) in Table~\ref{tab:analysis}. Although these results are only for CQL, we expect that any offline RL method would, insofar as it combats distributional shift by staying close to the data, would exhibit a similar problem.

\textbf{To mathematically quantify} the effects of data-sharing in multi-task offline RL, we appeal to safe policy improvement bounds~\citep{laroche2019safe,kumar2020conservative,yu2021combo} and discuss cases where data-sharing between tasks $i$ and $j$ can degrade the amount of worst-case guaranteed improvement over the behavior policy. Prior work~\citep{kumar2020conservative} has shown that the generic offline RL algorithm in Equation~\ref{eqn:generic_offline_rl} enjoys the following guarantees of policy improvement on the actual MDP, beyond the behavior policy: 
\begin{align}
\label{eqn:spi}
    J(\pi^*) &\geq J(\pi_\beta) - \mathcal{O}(1/ (1 - \gamma)^2) \mathbb{E}_{\bs, \ba \sim d^{\pi}} \left[\sqrt{\frac{D(\pi(\cdot|\bs), \pi_\beta(\cdot|\bs))}{|\mathcal{D}(\bs)|}}\right] + \alpha/(1 - \gamma) D(\pi, \pi_\beta).
\end{align}
We will use Equation~\ref{eqn:spi} to understand the scenarios where data sharing can hurt. When data sharing modifies $\mathcal{D} = \mathcal{D}_i$ to $\mathcal{D} = \mathcal{D}^\mathrm{eff}_i$, which includes $\mathcal{D}_i$ as a subset, it effectively aims at reducing the magnitude of the second term (i.e., sampling error) by increasing the denominator. This can be highly effective if the state distribution of the learned policy $\pi^*$ and the dataset $\mathcal{D}$ overlap. However, an increase in the divergence $D(\pi(\cdot|\bs), \pi^\beta(\cdot|\bs))$ as a consequence of relabeling implies a potential increase in the sampling error, unless the increased value of $|\mathcal{D}^\mathrm{eff}(\bs)|$ compensates for this. Additionally, the bound
also depends on the quality of the behavior data added after relabeling: if the resulting behavior policy $\pi^\mathrm{eff}_\beta$ is more suboptimal compared to $\pi_\beta$, i.e., $J(\pi^\mathrm{eff}_\beta) < J(\pi_\beta)$, then the guaranteed amount of improvement also reduces.

\textbf{To conclude,} our analysis reveals that while data sharing is often helpful in multi-task offline RL, it can lead to substantially poor performance on certain tasks as a result of exacerbated distributional shift between the optimal policy and the effective behavior policy induced after sharing data.

\vspace{-5pt}
\section{\methodname: Reducing Distributional Shift in Multi-Task Data Sharing}
\label{sec:method}
\vspace{-5pt}
The analysis in Section~\ref{sec:analysis} shows that na\"ive data sharing may be highly sub-optimal in some cases, and although it often does improve over no data sharing at all in practice, it can also lead to exceedingly poor performance. Can we devise a conservative approach that shares data intelligently to not exacerbate distributional shift as a result of relabeling?

\subsection{A First Attempt at Designing a Data Sharing Strategy}
A straightforward data sharing strategy is to utilize a transition for training only if it reduces the distributional shift.
Formally, this means that for a given transition $(\bs, \ba, r_j(\bs, \ba), \bs') \in \mathcal{D}_j$ sampled from the dataset $\mathcal{D}_j$, such a scheme would prescribe using it for training task $i$ (i.e., $(\bs, \ba, r_i(\bs, \ba), \bs') \in \mathcal{D}^\mathrm{eff}_i$) only if: 
\begin{tcolorbox}[colback=blue!6!white,colframe=black,boxsep=0pt,top=-3pt,bottom=2pt]
\begin{equation}
\label{eqn:pessimistically_conservative}
    \text{\textbf{CDS (basic)}:}~~~~~~~\Delta^\pi(\bs, \ba) := D(\pi(\cdot|\cdot, i), \pi_\beta(\cdot|\cdot, i))(\bs) - D(\pi(\cdot|\cdot, i), \pi_\beta^{\text{eff}}(\cdot|\cdot, i))(\bs) \geq 0. 
\end{equation}
\end{tcolorbox}
The scheme presented in Equation~\ref{eqn:pessimistically_conservative} would guarantee that distributional shift (i.e., second term in Equation~\ref{eqn:generic_multitask_offline_rl}) is reduced.
Moreover, since sharing data can only increase the size of the dataset and not reduce it, this scheme is guaranteed to not increase the sampling error term in Equation~\ref{eqn:spi}. We refer to this scheme as the basic variant of conservative data sharing (\textbf{CDS (basic)}).

While this scheme can prevent the negative effects of increased distributional shift, this scheme is quite pessimistic. Even in our experiments, we find that this variant of CDS does not improve performance by a large margin.
Additionally, as observed in Table~\ref{tab:analysis} (medium-medium-medium data composition) and discussed in Section~\ref{sec:analysis}, data sharing can often be useful despite an increased distributional shift (note the higher values of $D_\mathrm{KL}(\pi, \pi_\beta)$ in Table~\ref{tab:analysis}) likely because it reduces sampling error and potentially utilizes data of higher quality for training. \textbf{CDS (basic)} described above does not take into account these factors. Formally, the effect of the first term in Equation~\ref{eqn:generic_multitask_offline_rl}, $J_{\mathcal{D}^\text{eff}}(\pi)$ (the policy return in the empirical MDP generated by the dataset) and a larger increase in $|\mathcal{D}^\mathrm{eff}(\bs)|$ at the cost of somewhat increased value of $D(\pi(\cdot|\bs), \pi_\beta(\cdot|\bs)$ are not taken into account. Thus we ask: can we instead design a more complete version of CDS that effectively balances the tradeoff by incorporating all the discussed factors (distributional shift, sampling error, data quality)?

\subsection{The Complete Version of Conservative Data Sharing (CDS)}
\label{sec:complete_cds}
\begin{wrapfigure}{r}{0.5\textwidth}
\centering
\vspace{-0.7cm}
\includegraphics[width=0.99\linewidth]{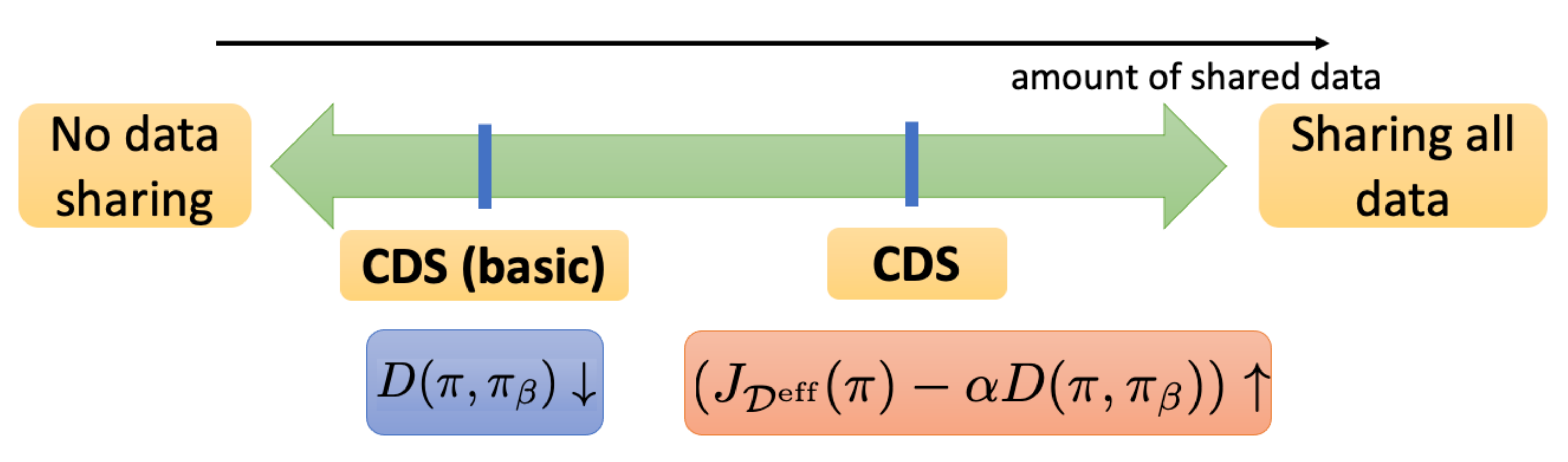}
\vspace{-0.6cm}
\caption{\label{fig:cds_variants_main} \footnotesize A schematic comparing \textbf{CDS} and \textbf{CDS (basic)} data sharing schemes relative to no sharing (left extreme) and full data sharing (right extreme). While p-CDS only shares data when distributional shift is strictly reduced, o-CDS is more optimistic and shares data when the objective in Equation~\ref{eqn:generic_multitask_offline_rl} is larger. Typically, we would expect that CDS shares more transitions than CDS (basic).}
\vspace{-0.6cm}
\end{wrapfigure}
Next, we present the complete version of our method. The complete version of CDS, which we will refer to as \textbf{CDS}, for notational brevity is derived from the following perspective: we note that a data sharing scheme can be viewed as altering the dataset $\mathcal{D}^\mathrm{eff}_i$, and hence the effective behavior policy, $\pi^\mathrm{eff}_\beta(\ba|\bs, i)$.
Thus, we can directly \emph{optimize} the objective in Equation~\ref{eqn:generic_multitask_offline_rl} with respect to $\pi^\mathrm{eff}_\beta$,
in addition to $\pi$, where $\pi^\mathrm{eff}_\beta$ belongs to the set of all possible effective behavior policies that can be obtained via any form of data sharing. Note that unlike CDS (basic), this approach would not rely on only indirectly controlling the objective in Equation~\ref{eqn:generic_multitask_offline_rl} by controlling distributional shift, but would aim to directly optimize the objective in Equation~\ref{eqn:generic_multitask_offline_rl}.  We formalize this optimization below in Equation~\ref{eqn:optimize_behavior}:
\begin{align}
\label{eqn:optimize_behavior}
    \arg \max_{\pi} \textcolor{red}{\max_{\pi^\mathrm{eff}_\beta \in \Pi_{\mathrm{relabel}}}}~~~ \left[J_{\mathcal{D}^\mathrm{eff}_i}(\pi) - \alpha D(\pi, \pi^\mathrm{eff}_\beta; i)\right],
\end{align}
where $\Pi_{\mathrm{relabel}}$ denotes the set of all possible behavior policies that can be obtained via relabeling. The next result characterizes safe policy improvement for Equation~\ref{eqn:optimize_behavior} and discusses how it leads to improvement over the behavior policy and also produces an effective practical method.
\begin{proposition}[Characterizing safe-policy improvement for CDS.] 
\label{prop:spi_thm}
Let $\pi^*(\ba|\bs)$ be the policy obtained by optimizing Equation~\ref{eqn:optimize_behavior}, and let $\pi_\beta(\ba|\bs)$ be the behavior policy for $\mathcal{D}_i$. Then, w.h.p. $\geq 1 - \delta$, $\pi^*$ is a $\zeta$-safe policy improvement over $\pi_\beta$, i.e., $J(\pi^*) \geq J(\pi_\beta) - \zeta$, where $\zeta$ is given by:
\begin{align*}
\small{
    \zeta = \mathcal{O}\left(\frac{1}{(1 - \gamma)^2}\right)  \mathbb{E}_{\bs \sim d^{\pi^*}_{\mathcal{D}^\mathrm{eff}_i}}\left[\sqrt{\frac{D_{\text{CQL}}(\pi^*, \pi_\beta^*)(\bs) + 1}{|\mathcal{D}^\mathrm{eff}_i(\bs)|}} \right]
    -  \left[\alpha D(\pi^*, \pi_\beta^*) + \underbrace{J(\pi^*_\beta) - J(\pi_\beta)}_{\text{(a)}} \right],}
\end{align*}
where $\mathcal{D}^\mathrm{eff}_i \sim d^{\pi_\beta^*}(\bs)$ and $\pi^*_\beta(\ba|\bs)$ denotes the policy $\pi \in \Pi_{\text{relabel}}$ that maximizes Equation~\ref{eqn:optimize_behavior}. 
\end{proposition}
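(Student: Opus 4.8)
The plan is to reduce the statement to a single application of the generic safe-policy-improvement (SPI) bound of Equation~\ref{eqn:spi}, instantiated on the effective dataset and effective behavior policy produced by the inner maximization, and then re-anchor the guarantee to the original behavior policy $\pi_\beta$ of $\mathcal{D}_i$ by adding and subtracting its return.

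First I would observe that for the fixed maximizing pair $(\pi^*, \pi^*_\beta)$ returned by Equation~\ref{eqn:optimize_behavior}, the outer objective is exactly the generic constrained problem of Equation~\ref{eqn:generic_offline_rl} run on the dataset $\mathcal{D}^\mathrm{eff}_i$ whose induced effective behavior policy is $\pi^*_\beta$. Hence the SPI guarantee of Equation~\ref{eqn:spi} applies with the substitutions $\mathcal{D} \leftarrow \mathcal{D}^\mathrm{eff}_i$ and $\pi_\beta \leftarrow \pi^*_\beta$, yielding, with probability at least $1-\delta$,
\begin{align*}
    J(\pi^*) \geq J(\pi^*_\beta) - \mathcal{O}\!\left(\tfrac{1}{(1-\gamma)^2}\right) \expected_{\bs \sim d^{\pi^*}_{\mathcal{D}^\mathrm{eff}_i}}\!\left[\sqrt{\tfrac{D_{\text{CQL}}(\pi^*, \pi^*_\beta)(\bs)+1}{|\mathcal{D}^\mathrm{eff}_i(\bs)|}}\right] + \alpha D(\pi^*, \pi^*_\beta),
\end{align*}
where the additive $+1$ under the square root, the concentration over $\mathcal{D}^\mathrm{eff}_i \sim d^{\pi^*_\beta}$, and the $\mathcal{O}(1/(1-\gamma)^2)$ prefactor are inherited directly from the CQL analysis~\citep{kumar2020conservative}.

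Second I would re-anchor the bound to $\pi_\beta$: writing $J(\pi^*_\beta) = J(\pi_\beta) + \left(J(\pi^*_\beta) - J(\pi_\beta)\right)$ and substituting into the display, then collecting the sampling-error term, the penalty $\alpha D(\pi^*,\pi^*_\beta)$, and the behavior-quality gap $J(\pi^*_\beta) - J(\pi_\beta)$ into a single quantity $\zeta$, reproduces exactly the claimed inequality $J(\pi^*) \geq J(\pi_\beta) - \zeta$. Term (a) is then precisely the change in behavior-policy quality induced by relabeling, which formalizes the data-quality effect discussed in Section~\ref{sec:analysis}: if relabeling enlarges the dataset with higher-return data, term (a) is positive and improves the bound.

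The main obstacle is that $\pi^*_\beta$ is not fixed a priori but is selected by the inner maximization, hence data-dependent, whereas Equation~\ref{eqn:spi} is stated for a fixed behavior policy. To invoke it at the adaptively chosen $\pi^*_\beta$ I would need the underlying concentration inequalities to hold \emph{uniformly} over the set $\Pi_{\text{relabel}}$ of attainable effective behavior policies; I expect to handle this with a union bound (or covering argument) over $\Pi_{\text{relabel}}$, which inflates the sampling-error constant and is exactly what the additive $+1$ in the numerator absorbs, and is also where the failure probability $\delta$ enters. A secondary point to verify is that $d^{\pi^*}_{\mathcal{D}^\mathrm{eff}_i}$ and $|\mathcal{D}^\mathrm{eff}_i(\bs)|$ are the correct visitation and count quantities for the relabeled data, so that the enlarged denominator genuinely reflects the reduced sampling error that relabeling is meant to buy, while $\alpha D(\pi^*,\pi^*_\beta)$ and term (a) track the competing distributional-shift and data-quality factors identified in the analysis.
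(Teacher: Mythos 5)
Your proposal is correct and follows essentially the same route as the paper's proof in Appendix~\ref{app:proofs}: the paper merely unpacks the SPI bound you invoke as a black box into its two ingredients --- the concentration inequality $J(\pi^*) \ge J_{\mathcal{D}^\mathrm{eff}_i}(\pi^*) - \mathcal{O}(\tfrac{1}{(1-\gamma)^2})\,\mathbb{E}[\sqrt{\cdot}\,]$ and the empirical improvement $J_{\mathcal{D}^\mathrm{eff}_i}(\pi^*) \ge J_{\mathcal{D}^\mathrm{eff}_i}(\pi^*_\beta) + \alpha D(\pi^*,\pi^*_\beta)$ obtained from the joint optimality of $(\pi^*,\pi^*_\beta)$ --- and then re-anchors to $J(\pi_\beta)$ exactly as you do. Your point that the adaptively chosen $\pi^*_\beta$ requires concentration uniform over $\Pi_{\mathrm{relabel}}$ is legitimate and is left implicit in the paper (a union bound over the finite set $\Pi_{\mathrm{relabel}}$ suffices), though note the $+1$ in the numerator is not where that union bound is absorbed --- it comes from the shift-independent component of the sampling error in the CQL analysis.
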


A proof and analysis of this proposition is provided in Appendix~\ref{app:proofs},
where we note that the bound in Proposition~\ref{prop:spi_thm} is stronger than both no data sharing as well as na\"ive data sharing. We show in Appendix~\ref{app:proofs} that optimizing Equation~\ref{eqn:optimize_behavior} reduces the numerator $D_\mathrm{CQL}(\pi^*, \pi_\beta^*)$ term while also increasing $|\mathcal{D}^\mathrm{eff}_i(\bs)|$, thus reducing the amount of sampling error. In addition, Lemma~\ref{lemma:a_gt_0} shows that the improvement term $(a)$ is guaranteed to be positive if a large enough $\alpha$ is chosen in Equation~\ref{eqn:optimize_behavior}. Combining these, we find data sharing using Equation~\ref{eqn:optimize_behavior} improves over both complete data sharing (which may increase $D_\mathrm{CQL}(\pi, \pi_\beta)$) and no data sharing (which does not increase $|\mathcal{D}^\mathrm{eff}_i(\bs)|$). A schematic comparing the two variants of CDS and na\"ive and no data sharing schemes is shown in Figure~\ref{fig:cds_variants_main}.

{\textbf{Optimizing Equation~\ref{eqn:optimize_behavior} tractably.}} 
The next step is to effectively convert Equation~\ref{eqn:optimize_behavior} into a simple condition for data sharing in  multi-task offline RL. While directly solving Equation~\ref{eqn:optimize_behavior} is intractable in practice, since both the terms depend on $\pi^\mathrm{eff}_\beta(\ba|\bs)$ (since the first term $J_{\mathcal{D}^\mathrm{eff}}(\pi)$ depends on the empirical MDP induced by the effective behavior policy and the amount of sampling error), we need to instead  solve Equation~\ref{eqn:optimize_behavior} approximately. Fortunately, we can optimize a \textit{lower-bound approximation} to Equation~\ref{eqn:optimize_behavior} that uses the dataset state distribution for the policy update in Equation~\ref{eqn:optimize_behavior} similar to modern actor-critic methods~\citep{degris2012off,lillicrap2015continuous,fujimoto2018addressing,haarnoja2018soft,kumar2020conservative} which only introduces an additional $D(\pi, \pi_\beta)$ term in the objective. This objective is given by: $\mathbb{E}_{\bs \sim \mathcal{D}^{\mathrm{eff}}_i}[\mathbb{E}_\pi[Q(\bs, \ba, i)] - \alpha' D(\pi(\cdot|\bs,i), \pi_\beta^\mathrm{eff}(\cdot|\bs,i))]$, which is equal to the expected ``conservative Q-value'' $\hat{Q}^\pi(\bs, \ba, i)$ on dataset states, policy actions and task $i$. Optimizing this objective via a co-ordinate descent on $\pi$ and $\pi^\mathrm{eff}_\beta$ dictates that $\pi$ be updated using a standard update of maximizing the conservative Q-function, $\hat{Q}^\pi$ (equal to the difference of the Q-function and $D(\pi, \pi^{\mathrm{eff}}_\beta; i)$).
Moreover, $\pi^{\mathrm{eff}}_\beta$ should also be updated towards maximizing the same expectation, $\mathbb{E}_{\bs, \ba \sim \mathcal{D}^\mathrm{eff}_i}[\hat{Q}^\pi(\bs, \ba, i)] := \mathbb{E}_{\bs, \ba \sim \mathcal{D}^\mathrm{eff}_i}[Q(\bs, \ba, i)] - \alpha D(\pi, \pi_\beta^\mathrm{eff}; i)$. This implies that when updating the behavior policy during relabeling, we should prefer state-action pairs that maximize the conservative Q-function.

\vspace{-0.35cm}
\begin{algorithm}[H]
\begin{small}
  \caption{\methodname: Conservative Data Sharing}\label{alg:cds}
  \begin{algorithmic}[1]
    \REQUIRE Multi-task offline dataset $\cup_{i=1}^N \mathcal{D}_i$.
    \STATE Randomly initialize policy  $\pi_\theta(\ba|\bs, i)$.
    \FOR{$k=1, 2, 3, \cdots,$}
    \STATE Initialize $\mathcal{D}^\mathrm{eff} \leftarrow \{\}$
    \FOR{$i=1, \cdots, N$}
        \STATE $\mathcal{D}^\mathrm{eff}_i$ = $\mathcal{D}_i \cup \{(\bs_j, \ba_j, \bs'_j, r_i) \in \mathcal{D}_{j \rightarrow i}: \Delta^\pi(\bs, \ba) \geq 0\}$ using Eq.~\ref{eqn:method} (\methodname) or Eq.~\ref{eqn:p-cds} (\methodname (basic)).
    \ENDFOR
    \STATE Improve policy by solving eq.~\ref{eqn:generic_multitask_offline_rl} using samples from $\mathcal{D}^\mathrm{eff}$ to obtain $\policy_\theta^{k+1}$.
    \ENDFOR
  \end{algorithmic}
\end{small}
\end{algorithm}
\vspace{-0.5cm}

\textbf{{Deriving the data sharing strategy for CDS.}}
Utilizing the insights for optimizing Equation~\ref{eqn:optimize_behavior} tractably as discussed above, we now present the effective data sharing rule prescribed by CDS. For any given task $i$, we want relabeling to incorporate transitions with the highest conservative Q-value into the resulting dataset $\mathcal{D}^\mathrm{eff}_i$, as this will directly optimize the tractable lower bound on Equation~\ref{eqn:optimize_behavior}. While directly optimizing Equation~\ref{eqn:optimize_behavior} will enjoy benefits of reduced sampling error since $J_{\mathcal{D}^\mathrm{eff}_i}(\pi)$ also depends on sampling error, our tractable lower bound approximation does not enjoy this benefit. This is because optimizing the lower-bound only increases the frequency of a state in the dataset, $|\mathcal{D}^\mathrm{eff}_i(\bs)|$ by atmost 1. To encourage further reduction in sampling error, we modify CDS to instead share all transitions with a conservative Q-value more than the top $k^\text{th}$ quantile of the original dataset $\mathcal{D}_i$, where $k$ is a hyperparameter. This provably increases the objective value in Equation~\ref{eqn:optimize_behavior} still ensuring that term $(a) > 0$ in Proposition~\ref{prop:spi_thm}, while also reducing $|\mathcal{D}^{\mathrm{eff}}_i(\bs)|$ in the denominator. Thus, for a given transition $(\bs, \ba, \bs') \in \mathcal{D}_j$,  
\begin{tcolorbox}[colback=blue!6!white,colframe=black,boxsep=0pt,top=-5pt,bottom=5pt]
\begin{align}
   \text{\textbf{CDS}:~~~~~~~~} \!\!\!\!\!\!\!\!\!(\bs, \ba, r_i, \bs') \in \mathcal{D}^{\mathrm{eff}}_i  \text{~if~} {\Delta}^\pi(\bs, \ba)\! := \hat{Q}^\pi(\bs, \ba, i) - P_{k\%}\!\left\{\!\hat{Q}^\pi(\bs', \ba', i)\!\!: \bs', \ba' \sim \mathcal{D}_i\!\right\} \geq 0,
\label{eqn:method}
\end{align}
\end{tcolorbox}
where $\hat{Q}^\pi$ denotes the learned conservative Q-function estimate. If the condition in Equation~\ref{eqn:method} holds for the given $(\bs, \ba)$, then the corresponding relabeled transition, $(\bs, \ba, r_{i}(\bs, \ba), \bs')$ is added to $\mathcal{D}^\mathrm{eff}_i$.

\subsection{{Practical implementation of \methodname}} 
The pseudocode of CDS is summarized in Algorithm~\ref{alg:cds}. The complete variant of CDS can be directly implemented using the rule in Equation~\ref{eqn:method} with conservative Q-value estimates obtained via any offline RL method that constrains the learned policy to the behavior policy. For implementing \methodname\ (basic), we reparameterize the divergence $D$ in Equation~\ref{eqn:pessimistically_conservative} to use the learned conservative Q-values. This is especially useful for our implementation since we utilize CQL as the base offline RL method, and hence we do not have access to an explicit divergence. In this case, $\Delta^\pi(\bs, \ba)$ can be redefined as, $\Delta^\pi(\bs, \ba) :=$
\begin{align}
    \mathbb{E}_{\bs' \sim \mathcal{D}^i}\left[\mathbb{E}_{\ba' \sim \pi}[\hat{Q}(\bs', \ba', i)] - \mathbb{E}_{\ba'' \sim \mathcal{D}_i}[\hat{Q}(\bs', \ba'', i)]\right] - \left(\mathbb{E}_{\ba' \sim \pi}[\hat{Q}(\bs, \ba', i)] - Q(\bs, \ba, i)\right),
\label{eqn:p-cds}
\end{align}
Equation~\ref{eqn:p-cds} can be viewed as the difference between the CQL~\citep{kumar2020conservative} regularization term on a given $(\bs, \ba)$ and the original dataset for task $i$, $\mathcal{D}_i$. This CQL regularization term is equal to the divergence between the learned policy $\pi(\cdot|\bs)$ and the behavior policy $\pi_\beta(\cdot|\bs)$, therefore Equation~\ref{eqn:p-cds} practically computes Equation~\ref{eqn:pessimistically_conservative}. 

Finally, both variants of \methodname\ train a policy, 
$\pi(\ba|\bs; i)$, either conditioned on the task $i$ (i.e., with weight sharing) or a separate $\pi(\ba|\bs)$ policy for each task with no weight sharing, using the resulting relabeled dataset, $\mathcal{D}^\mathrm{eff}_i$.
For practical implementation details of \methodname\, see Appendix~\ref{app:details}.

\section{Experimental Evaluation}
\label{sec:exp}

We conduct experiments to answer six main questions: \textbf{(1)} can \methodname\ prevent performance degradation when sharing data as observed in Section~\ref{sec:analysis}?, \textbf{(2)} how does \methodname\ compare to vanilla multi-task offline RL methods and prior data sharing methods?
\textbf{(3)} can \methodname\ handle sparse reward settings, where data sharing is particularly important due to scarce supervision signal? \textbf{(4)} can \methodname\ handle goal-conditioned offline RL settings where the offline dataset is undirected and highly suboptimal? \textbf{(5)} Can \methodname\ scale to complex visual observations? \arxiv{\textbf{(6)} Can \methodname\ be combined with any offline RL algorithms? Besides these questions, we visualize CDS weights for better interpretation of the data sharing scheme learned by CDS in Figure~\ref{fig:antmaze_vis} in Appendix~\ref{app:cds_vis}}.

\begin{wrapfigure}{r}{0.65\textwidth}
    \vspace{-0.65cm}
    \centering
    \includegraphics[width=0.61\textwidth]{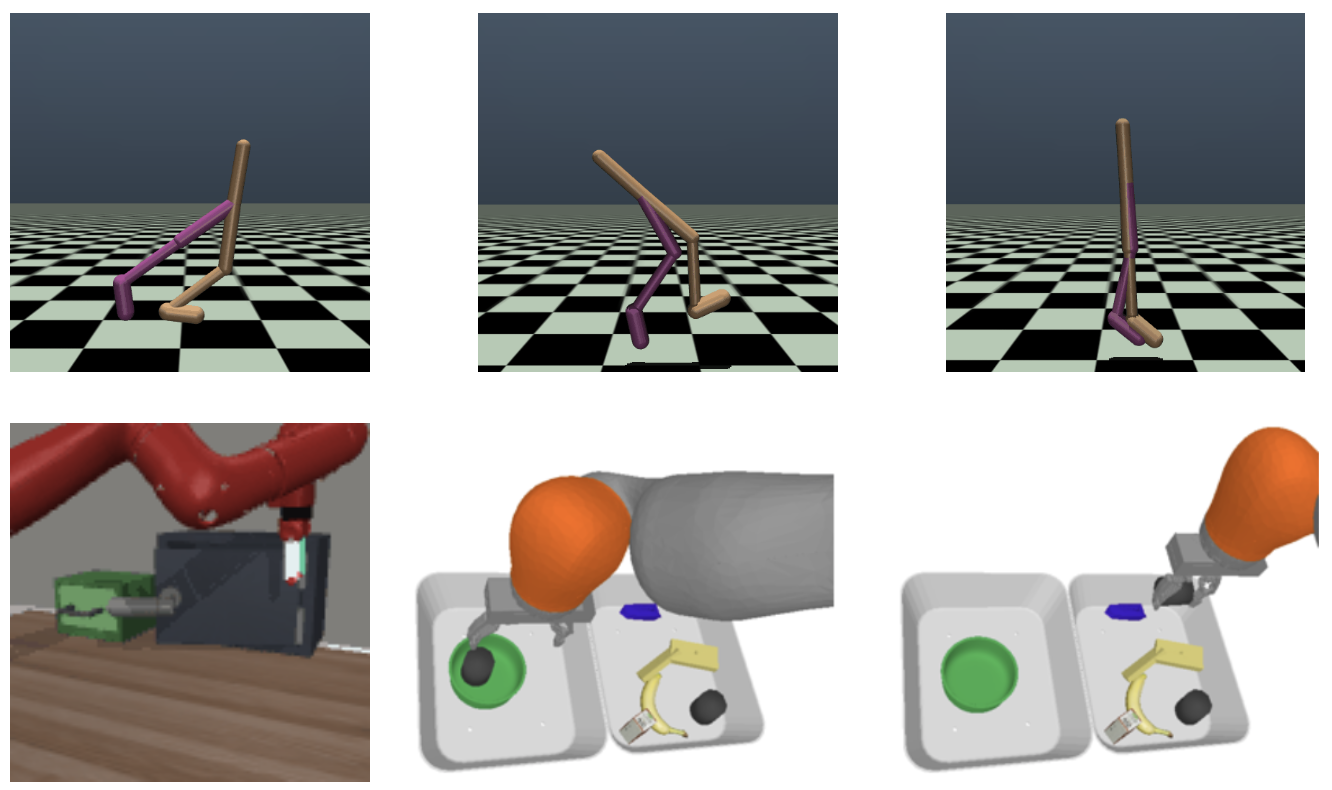}
    \vspace{-0.32cm}
    \caption{\footnotesize  Environments (from left to right): walker2d {run forward}, walker2d {run backward}, walker2d {jump},  Meta-World {door open/close} and {drawer open/close} and vision-based pick-place tasks in \citep{kalashnikov2021mt}.}
    \label{fig:env}
    \vspace{-0.4cm}
\end{wrapfigure}
\textbf{Comparisons.} To answer these questions, we consider the following prior methods. On tasks with low dimensional state spaces, we compare with the online multi-task relabeling approach \textbf{HIPI}~\citep{eysenbach2020rewriting}, which uses inverse RL to infer for which tasks the datapoints are optimal and in practice routes a transition to task with the highest Q-value. We adapt HIPI to the offline setting by applying its data routing strategy to a conservative offline RL algorithm.
We also compare to na\"ively sharing data across all tasks (denoted as \textbf{Sharing All}) and vanilla multi-task offline RL method without any data sharing (denoted as \textbf{No Sharing}). On image-based domains, we compare \methodname\ to the data sharing strategy based on human-defined skills~\citep{kalashnikov2021mt} (denoted as \textbf{Skill}), which manually groups tasks into different skills (e.g. skill ``pick'' and skill ``place'') and only routes an episode to target tasks that belongs to the same skill.
In these domains, we also compare to \textbf{HIPI}, \textbf{Sharing All} and \textbf{No Sharing}. \arxiv{Beyond these multi-task RL approaches with data sharing, to assess the importance of data sharing in offline RL, we perform an additional comparison to other alternatives to data sharing in multi-task offline RL settings. One traditionally considered approach is to use data from other tasks for some form of ``pre-training'' before learning to solve the actual task. We instantiate this idea by considering a method from \citet{yang2021representation} that conducts contrastive representation learning on the multi-task datasets to extract shared representation between tasks and then runs multi-task RL on the learned representations. We discuss this comparison in detail in Table~\ref{tbl:pretrain_comparison} in Appendix~\ref{app:pretrain_comparison}.} 
To answer question (6), we use CQL~\citep{kumar2020conservative} (a Q-function regularization method) and BRAC~\citep{wu2019behavior} (a policy-constraint method) as the base offline RL algorithms for all methods. \arxiv{We discuss evaluations of CDS with CQL in the main text and include the results of CDS with BRAC in Table~\ref{tbl:brac_comparison} in Appendix~\ref{app:brac_results}.} 
For more details on setup and hyperparameters, see Appendix~\ref{app:details}.

\textbf{Multi-task environments.} We consider a number of multi-task reinforcement learning problems on environments visualized in Figure~\ref{fig:env}. 
\arxiv{To answer questions (1) and (2), we consider the walker2d locomotion environment from OpenAI Gym~\citep{brockman2016openai} with dense rewards. We use three tasks, \texttt{run forward}, \texttt{run backward} and \texttt{jump}, as proposed in prior offline RL work~\citep{yu2020mopo}.}
To answer question (3), we also evaluate on robotic manipulation domains using environments from the Meta-World benchmark~\citep{yu2020metaworld}. We consider four tasks: \texttt{door open}, \texttt{door close}, \texttt{drawer open} and \texttt{drawer close}. Meaningful data sharing requires a consistent state representation across tasks, so we put both the door and the drawer on the same table, as shown in Figure~\ref{fig:env}. Each task has a sparse reward of 1 when the success condition is met and 0 otherwise. To answer question (4), we consider maze navigation tasks where the temporal ``stitching'' ability of an offline RL algorithm is crucial to obtain good performance. We create goal reaching tasks using the ant robot in the medium and hard mazes from D4RL~\citep{fu2020d4rl}. The set of goals is a fixed discrete set of size 7 and 3 for large and medium mazes, respectively. Following \citet{fu2020d4rl}, a reward of +1 is given and the episode terminates if the state is within a threshold radius of the goal. Finally, to explore how \methodname\ scales to image-based manipulation tasks (question (5)), we utilize a simulation environment similar to the real-world setup presented in~\citep{kalashnikov2021mt}. This environment, which was utilized by \citet{kalashnikov2021mt} as a representative and realistic simulation of a real-world robotic manipulation problem, consists of 10 image-based manipulation tasks that involve different combinations of picking specific objects (banana, bottle, sausage, milk box, food box, can and carrot) and placing them in one of the three fixtures (bowl, plate and divider plate) (see example task images in Fig.~\ref{fig:env}).
We pick these tasks due to their similarity to the real-world setup introduced by \citet{kalashnikov2021mt}, which utilized a skill-based data-sharing heuristic strategy (\textbf{Skill}) for data-sharing that significantly outperformed simple data-sharing alternatives, which we use as a point of comparison.
More environment details are in the appendix. We report the average return for locomotion tasks and success rate for AntMaze and both manipluation environments, averaged over 6 and 3 random seeds for environments with low-dimensional inputs and image inputs respectively.

\begin{table}[h]
\vspace{0.1cm}
  \centering
  \scriptsize
  \def\arraystretch{0.9}
  \setlength{\tabcolsep}{0.42em}
  \vspace{-0.4cm}
\begin{tabularx}{0.75\linewidth}{cc|cccc}
  \toprule
 \multicolumn{1}{c}{\multirow{1.5}[2]{*}{Environment}} & \multicolumn{1}{c}{\multirow{1.5}[2]{*}{Dataset types / Tasks}}\vline &
 \multicolumn{3}{c}{$D_\text{KL}(\pi, \pi_\beta)$}\\
& \multicolumn{1}{c}{} \vline& \multicolumn{1}{c}{\textbf{No Sharing}}  & \multicolumn{1}{c}{\textbf{Sharing All}} & \multicolumn{1}{c}{\textbf{CDS (basic) (ours)}}  & \multicolumn{1}{c}{\textbf{CDS (ours)}}\\
\midrule
  &medium-replay / run forward & \textbf{1.49} & 7.76 & 14.31 & \textbf{1.49}\\
  walker2d& medium / run backward &  \textbf{1.91} & 12.2 & 8.26 & 6.09\\
  & \cellcolor{yellow} expert / jump & \cellcolor{yellow} 3.12 & \cellcolor{yellow} 27.5 & \cellcolor{yellow} 13.25  & \cellcolor{yellow} \textbf{2.91}\\
    \bottomrule
    \end{tabularx}
    \vspace{0.1cm}
         \caption{\footnotesize Measuring $D_\text{KL}(\pi, \pi_\beta)$ on the walker2d environment.  \textbf{Sharing All} degrades the performance on task \text{jump} with limited expert data as discussed in Table~\ref{tab:analysis} (M-R refers to medium-replay, M refers to medium, and E refers to expert). \methodname\ manages to obtain a $\behavior$ after data sharing that is closer to the single-task optimal policy in terms of the KL divergence compared to \textbf{No Sharing} and \textbf{Sharing All} on task \texttt{jump} (highlighted in yellow). Since \methodname\ also achieves better performance, this analysis suggests that reducing distribution shift is important for effective offline data sharing.
     \label{tab:analysis_cds}
     \vspace{-0.5cm}
     }
\end{table}

\textbf{Multi-task datasets.}  Following the analysis in Section~\ref{sec:analysis}, we intentionally construct datasets with a variety of heterogeneous behavior policies to test if \methodname\ can provide effective data sharing to improve performance while avoiding harmful data sharing that exacerbates distributional shift. For the locomotion domain, we use a large, diverse dataset (medium-replay) for \texttt{run forward}, a medium-sized dataset for \texttt{run backward}, and an expert dataset with limited data for \texttt{run jump}. For Meta-World, we consider medium-replay datasets with 152K transitions for task \texttt{door open} and \texttt{drawer close} and expert datasets with only 2K transitions for task \texttt{door close} and \texttt{drawer open}. For AntMaze, we modify the D4RL datasets for antmaze-*-play environments to construct two kinds of multi-task datasets: an ``undirected'' dataset, where data is equally divided between different tasks and the rewards are correspondingly relabeled, and a ``directed'' dataset, where a trajectory is associated with the goal closest to the final state of the trajectory. This means that the per-task data in the undirected setting may not be relevant to reaching the goal of interest. Thus, data-sharing is crucial for good performance: methods that do not effectively perform data sharing and train on largely task-irrelevant data are expected to perform worse. Finally, for image-based manipulation tasks, we collect datasets for all the tasks individually by running online RL~\cite{kalashnikov2018scalable} until the task reaches medium-level performance (40\% for picking tasks and 80\% placing tasks). At that point, we merge the entire replay buffers from different tasks creating a final dataset of 100K RL episodes with 25 transitions for each episode.

\begin{table*}[t!]
\centering
\vspace*{0.1cm}
\scriptsize
\resizebox{\textwidth}{!}{\begin{tabular}{l|l|r|r|r|r|r}
\toprule
\textbf{Environment} & \textbf{Tasks / Dataset type} & \textbf{\methodname\ (ours)} & \textbf{\methodname\ (basic)} & \textbf{HIPI}~\cite{eysenbach2020rewriting}& \textbf{Sharing All} & \textbf{No Sharing}\\ \midrule
& run forward / medium-replay & \textbf{1057.9}$\pm$121.6 & 968.6$\pm$188.6 & 695.5$\pm$61.9 & 701.4$\pm$47.0 & 590.1$\pm$48.6\\
walker2d & run backward / medium & 564.8$\pm$47.7 & 594.5$\pm$22.7 & 626.0$\pm$48.0& \textbf{756.7}$\pm$76.7& 614.7$\pm$87.3\\
& jump / expert & 1418.2$\pm$138.4 & 1501.8$\pm$115.1  & \textbf{1603.7}$\pm$146.8 & 885.1$\pm$152.9 & 1575.2$\pm$70.9\\
& \CC \textbf{average} & \CC 1013.6$\pm$71.5 &\CC \textbf{1021.6}$\pm$76.9 & \CC 975.1$\pm$45.1 & \CC 781.0$\pm$100.8 & \CC 926.6$\pm$37.7\\\midrule
& door open / medium-replay & \textbf{58.4\%}$\pm$9.3\% & 30.1\%$\pm$16.6\% & 26.5\%$\pm$20.5\% & 34.3\%$\pm$17.9\% & 14.5\%$\pm$12.7\\
& door close / expert & \textbf{65.3\%}$\pm$27.7\% & 41.5\%$\pm$28.2\% & 1.3\%$\pm$5.3\% & 48.3\%$\pm$27.3\% & 4.0\%$\pm$6.1\% \\
Meta-World~\citep{yu2020metaworld}& drawer open / expert & \textbf{57.9\%}$\pm$16.2\% & 39.4\%$\pm$16.9\% & 41.2\%$\pm$24.9\% & 55.1\%$\pm$9.4\% & 16.0\%$\pm$17.5\%\\
& drawer close / medium-replay & 98.8\%$\pm$0.7\% & 86.3\%$\pm$0.9\% & 62.2\%$\pm$33.4\% & \textbf{100.0\%}$\pm$0\% & 99.0\%$\pm$0.7\%\\
& \CC \textbf{average} & \CC \textbf{70.1\%}$\pm$8.1\% & \CC 49.3\%$\pm$16.0\% & \CC 32.8\%$\pm$18.7\% & \CC 59.4\%$\pm$5.7\% & \CC 33.4\%$\pm$8.3\%\\
\midrule
& large maze (7 tasks) / undirected & \textbf{22.8}\% $\pm$ 4.5\% & 10.0\% $\pm$ 5.9\% & 1.3\% $\pm$ 2.3\%  & 16.7\% $\pm$ 7.0\% & 13.3\% $\pm$ 8.6\% \\
AntMaze~\citep{fu2020d4rl}  & large maze (7 tasks) / directed &  \textbf{24.6\%} $\pm$ 4.7\% & 0.0\% $\pm$ 0.0\% & 11.8\% $\pm$ 5.4\% & 20.6\% $\pm$ 4.4\% & 19.2\% $\pm$ 8.0\% \\
& medium maze (3 tasks) / undirected &  \textbf{36.7\%} $\pm$ 6.2\% & 0.0\% $\pm$ 0.0\% & 8.6\% $\pm$ 3.2\% & 22.9\% $\pm$ 3.6\% & 21.6\% $\pm$ 7.1\% \\
& medium maze (3 tasks) / directed &  \textbf{18.5}\% $\pm$ 6.0\% & 0.0\% $\pm$ 0.0\% & 8.3\% $\pm$ 9.1\% & 12.4\% $\pm$ 5.4\% & \textbf{17.0\%} $\pm$ 3.2\% \\
\bottomrule
\end{tabular}}
\vspace{-0.2cm}
\caption{\footnotesize Results for multi-task locomotion (walker2d), robotic manipulation (Meta-World) and navigation environments (AntMaze) with low-dimensional state inputs.
 \arxiv{Numbers are averaged across 6 seeds, $\pm$ the 95$\%$-confidence interval.} We include per-task performance for walker2d and Meta-World domains and the overall performance averaged across tasks (highlighted in gray) for all three domains. We bold the highest score across all methods. \methodname\ performs achieves the best or comparable performance on all of these multi-task environments.
}
\label{tbl:gym}
\normalsize
\vspace{-0.3cm}
\end{table*}

\begin{table*}[t!]
\small{
\centering
\vspace*{0.1cm}
\resizebox{\textwidth}{!}{\begin{tabular}{l|r|r|r|r|r}
\toprule
\textbf{Task Name} & \textbf{\methodname\ (ours)}& \textbf{HIPI}~\cite{eysenbach2020rewriting} & \textbf{Skill~\cite{kalashnikov2021mt}} & \textbf{Sharing All} & \textbf{No Sharing}\\ \midrule
\texttt{lift-banana} & \textbf{53.1\%}$\pm$3.2\% & 48.3\%$\pm$6.0\%  & 32.1\%$\pm$9.5\% & 41.8\%$\pm$4.2\% & 20.0\%$\pm$6.0\%\\
\texttt{lift-bottle} & \textbf{74.0\%}$\pm$6.3\% & 64.4\%$\pm$7.7\%  & 55.9\%$\pm$9.6\% & 60.1\%$\pm$10.2\% & 49.7\%$\pm$8.7\%\\
\texttt{lift-sausage} & \textbf{71.8\%}$\pm$3.9\%  & 71.0\%$\pm$7.7\%  & 68.8\%$\pm$9.3\% & 70.0\%$\pm$7.0\% & 60.9\%$\pm$6.6\%\\
\texttt{lift-milk}& \textbf{83.4\%}$\pm$5.2\% & 79.0\%$\pm$3.9\% & 68.2\%$\pm$3.5\% & 72.5\%$\pm$5.3\% & 68.4\%$\pm$6.1\%\\

\texttt{lift-food} & 61.4\%$\pm$9.5\% & \textbf{62.6\%}$\pm$6.3\% & 41.5\%$\pm$12.1\% & 58.5\%$\pm$7.0\% & 39.1\%$\pm$7.0\%\\
\texttt{lift-can} & 65.5\%$\pm$6.9\% & \textbf{67.8\%}$\pm$6.8\%  & 50.8\%$\pm$12.5\% & 57.7\%$\pm$7.2\% & 49.1\%$\pm$9.8\%\\
\texttt{lift-carrot} & \textbf{83.8\%}$\pm$3.5\% & 78.8\%$\pm$6.9\% & 66.0\%$\pm$7.0\%& 75.2\%$\pm$7.6\%& 69.4\%$\pm$7.6\%\\
\texttt{place-bowl} & \textbf{81.0\%}$\pm$8.1\%  & 77.2\%$\pm$8.9\% & 80.8\%$\pm$6.9\% & 70.8\%$\pm$7.8\% & 80.3\%$\pm$8.6\%\\
\texttt{place-plate} & 85.8\%$\pm$6.6\%  & 83.6\%$\pm$7.9\% & 78.4\%$\pm$9.6\% & 78.7\%$\pm$7.6\% & \textbf{86.1}\%$\pm$7.7\%\\
\texttt{place-divider-plate} & \textbf{87.8\%}$\pm$7.6\%  & 78.0\%$\pm$10.5\% & 80.8\%$\pm$5.3\% & 79.2\%$\pm$6.3\% & 85.0\%$\pm$5.9\%\\
\CC \textbf{average} & \CC \textbf{74.8\%}$\pm$6.4\%  & \CC 71.1\%$\pm$7.5\% & \CC 62.3\%$\pm$8.9\% & \CC 66.4\%$\pm$7.2\% & \CC 60.8\%$\pm$7.5\%\\
\bottomrule
\end{tabular}}
\vspace{-0.1cm}
\caption{\footnotesize Results for multi-task vision-based robotic manipulation domains in \citep{kalashnikov2021mt}. \arxiv{Numbers are averaged across 3 seeds, $\pm$ the 95$\%$ confidence interval.} We consider 7 tasks denoted as \texttt{lift-object} where the goal of each task is to lift a different object and 3 tasks denoted as \texttt{place-fixture} where the goal of each task is to place a lifted object onto different fixtures. \methodname\ outperforms both a skill-based data sharing strategy~\citep{kalashnikov2021mt} (\textbf{Skill}) and other data sharing methods on the average task success rate (highlighted in gray) and 7 out of 10 per-task success rate.
}
\label{tbl:mtopt}
}
\vspace{-0.6cm}
\end{table*}

\textbf{Results on domains with low-dimensional states.} We present the results on all non-vision environments in Table~\ref{tbl:gym}.
\methodname\ achieves the 
best average performance across all environments except that on walker2d, it achieves the second best performance, obtaining slightly worse the other variant \methodname\ (basic). On the locomotion domain, we observe the most 
significant improvement on task \texttt{jump} on all three environments. We interpret this as strength of conservative data sharing, which mitigates the distribution shift that can be introduced by routing large amount of other task data to the task with limited data and narrow distribution. We also validate this by measuring the $D_\text{KL}(\pi, \pi_\beta)$ in Table~\ref{tab:analysis_cds} where $\pi_\beta$ is the behavior policy after we perform \methodname\
to share data. As shown in Table~\ref{tab:analysis_cds}, \methodname\ achieves lower KL divergence
 between the single-task optimal policy and the behavior policy after data sharing on task \texttt{jump} with limited expert data, whereas \textbf{Sharing All} results in much higher KL divergence compared to \textbf{No Sharing} as discussed in Section~\ref{sec:analysis} and Table~\ref{tab:analysis}. Hence, \methodname\ is able to mitigate distribution shift when sharing data and result in performance boost.

On the Meta-World tasks, we find that the agent without data sharing completely fails to solve most of the tasks due to the low quality of the medium replay datasets and the insufficient data for the expert datasets. \textbf{Sharing All} improves performance since in the sparse reward settings, data sharing can introduce more supervision signal and help training. \methodname\ further improves over \textbf{Sharing All}, suggesting that \methodname\ can not only prevent harmful data sharing, but also lead to more effective multi-task learning compared to \textbf{Sharing All} in scenarios where data sharing is imperative. It's worth noting that \methodname\ (basic) performs worse than \methodname\ and \textbf{Sharing All}, indicating that relabeling data that only mitigates distributional shift is too pessimistic and might not be sufficient to discover the shared structure across tasks.

In the AntMaze tasks, we observe that \methodname\ performs better than \textbf{Sharing All} and significantly outperforms HIPI in all four settings. Perhaps surprisingly, \textbf{No Sharing} is a strong baseline, however, is outperformed by \methodname\ with the harder undirected data. Moreover, \methodname\ performs on-par or better in the undirected setting compared to the directed setting, indicating the effectiveness of \methodname\ in routing data in challenging settings.

\textbf{Results on image-based robotic manipulation domains.}  %
Here, we compare \methodname\ to the hand-designed \textbf{Skill} sharing strategy, in addition to the other methods. \arxiv{Given that \methodname\ achieves significantly better performance than \methodname\ (basic) on low-dimensional robotic manipulation tasks in Meta-World, we only evaluate \methodname\ in the vision-based robotic manipulation domains.}  Since \methodname\ is applicable to any offline multi-task RL algorithm, we employ it as a separate data-sharing strategy in \citep{kalashnikov2021mt} while keeping the model architecture and all the other hyperparameters constant, which allows us to carefully evaluate the influence of data sharing in isolation. The results are reported in Table~\ref{tbl:mtopt}. \methodname\ outperforms both \textbf{Skill} and other approaches, indicating that \methodname\ is able to scale to high-dimensional observation inputs and can effectively remove the need for manual curation of data sharing strategies.

\vspace{-5pt}
\section{Conclusion}
\vspace{-5pt}
\label{sec:conclusion}
In this paper, we study the multi-task offline RL setting, focusing on the problem of sharing offline data across tasks for better multi-task learning. Through empirical analysis, we identify that na\"{i}vely sharing data across tasks generally helps learning but can significantly hurt performance in scenarios where excessive distribution shift is introduced. To address this challenge, we present conservative data sharing (CDS), which relabels data to a task when the conservative Q-value of the given transition is better than the expected conservative Q-value of the target task. On multitask locomotion, manipulation, navigation, and vision-based manipulation domains, CDS consistently outperforms or achieves comparable performance to existing data sharing approaches. While CDS attains superior results, it is not able to handle data sharing in settings where dynamics vary across tasks and requires functional forms of rewards. We leave these as future work. 

\section*{Acknowledgements}
We thank Kanishka Rao, Xinyang Geng, Avi Singh, other members of RAIL at UC Berkeley, IRIS at Stanford and Robotics at Google and anonymous reviewers for valuable and constructive feedback on an early version of this manuscript. This research was funded in part by Google, ONR grant N00014-20-1-2675, Intel Corporation and the DARPA Assured Autonomy Program. CF is a CIFAR Fellow in the Learning in Machines and Brains program.

\bibliography{reference}

\begin{thebibliography}{93}
\providecommand{\natexlab}[1]{#1}
\providecommand{\url}[1]{\texttt{#1}}
\expandafter\ifx\csname urlstyle\endcsname\relax
  \providecommand{\doi}[1]{doi: #1}\else
  \providecommand{\doi}{doi: \begingroup \urlstyle{rm}\Url}\fi

\bibitem[Abdolmaleki et~al.(2018)Abdolmaleki, Springenberg, Tassa, Munos,
  Heess, and Riedmiller]{Abdolmaleki2018MaximumAP}
Abbas Abdolmaleki, Jost~Tobias Springenberg, Y.~Tassa, R.~Munos, N.~Heess, and
  Martin~A. Riedmiller.
\newblock Maximum a posteriori policy optimisation.
\newblock \emph{ArXiv}, abs/1806.06920, 2018.

\bibitem[Agarwal et~al.(2020)Agarwal, Schuurmans, and
  Norouzi]{agarwal2020optimistic}
Rishabh Agarwal, Dale Schuurmans, and Mohammad Norouzi.
\newblock An optimistic perspective on offline reinforcement learning.
\newblock In \emph{International Conference on Machine Learning}, pages
  104--114. PMLR, 2020.

\bibitem[Andrychowicz et~al.(2017)Andrychowicz, Wolski, Ray, Schneider, Fong,
  Welinder, McGrew, Tobin, Abbeel, and Zaremba]{andrychowicz2017hindsight}
Marcin Andrychowicz, Filip Wolski, Alex Ray, Jonas Schneider, Rachel Fong,
  Peter Welinder, Bob McGrew, Josh Tobin, Pieter Abbeel, and Wojciech Zaremba.
\newblock Hindsight experience replay.
\newblock \emph{arXiv preprint arXiv:1707.01495}, 2017.

\bibitem[Argenson and Dulac-Arnold(2020)]{argenson2020model}
Arthur Argenson and Gabriel Dulac-Arnold.
\newblock Model-based offline planning.
\newblock \emph{arXiv preprint arXiv:2008.05556}, 2020.

\bibitem[Brockman et~al.(2016)Brockman, Cheung, Pettersson, Schneider,
  Schulman, Tang, and Zaremba]{brockman2016openai}
Greg Brockman, Vicki Cheung, Ludwig Pettersson, Jonas Schneider, John Schulman,
  Jie Tang, and Wojciech Zaremba.
\newblock Openai gym.
\newblock \emph{arXiv preprint arXiv:1606.01540}, 2016.

\bibitem[Caruana(1997)]{caruana1997multitask}
Rich Caruana.
\newblock Multitask learning.
\newblock \emph{Machine learning}, 28\penalty0 (1):\penalty0 41--75, 1997.

\bibitem[Charles et~al.(2013)Charles, Chickering, and
  Simard]{charles2013counterfactual}
Denis Charles, Max Chickering, and Patrice Simard.
\newblock Counterfactual reasoning and learning systems: The example of
  computational advertising.
\newblock \emph{Journal of Machine Learning Research}, 14, 2013.

\bibitem[Chebotar et~al.(2021)Chebotar, Hausman, Lu, Xiao, Kalashnikov, Varley,
  Irpan, Eysenbach, Julian, Finn, and Levine]{chebotar2021actionable}
Yevgen Chebotar, Karol Hausman, Yao Lu, Ted Xiao, Dmitry Kalashnikov, Jake
  Varley, Alex Irpan, Benjamin Eysenbach, Ryan Julian, Chelsea Finn, and Sergey
  Levine.
\newblock Actionable models: Unsupervised offline reinforcement learning of
  robotic skills.
\newblock \emph{arXiv preprint arXiv:2104.07749}, 2021.

\bibitem[Chen et~al.(2021)Chen, Nair, and Finn]{chen2021learning}
Annie~S Chen, Suraj Nair, and Chelsea Finn.
\newblock Learning generalizable robotic reward functions from" in-the-wild"
  human videos.
\newblock \emph{arXiv preprint arXiv:2103.16817}, 2021.

\bibitem[Christiano et~al.(2017)Christiano, Leike, Brown, Martic, Legg, and
  Amodei]{christiano2017deep}
Paul Christiano, Jan Leike, Tom~B Brown, Miljan Martic, Shane Legg, and Dario
  Amodei.
\newblock Deep reinforcement learning from human preferences.
\newblock \emph{arXiv preprint arXiv:1706.03741}, 2017.

\bibitem[Dasari et~al.(2020)Dasari, Ebert, Tian, Nair, Bucher, Schmeckpeper,
  Singh, Levine, and Finn]{dasari2020robonet}
Sudeep Dasari, Frederik Ebert, Stephen Tian, Suraj Nair, Bernadette Bucher,
  Karl Schmeckpeper, Siddharth Singh, Sergey Levine, and Chelsea Finn.
\newblock Robonet: Large-scale multi-robot learning, 2020.

\bibitem[Degris et~al.(2012)Degris, White, and Sutton]{degris2012off}
Thomas Degris, Martha White, and Richard~S Sutton.
\newblock Off-policy actor-critic.
\newblock \emph{arXiv preprint arXiv:1205.4839}, 2012.

\bibitem[D'Eramo et~al.(2019)D'Eramo, Tateo, Bonarini, Restelli, and
  Peters]{d2019sharing}
Carlo D'Eramo, Davide Tateo, Andrea Bonarini, Marcello Restelli, and Jan
  Peters.
\newblock Sharing knowledge in multi-task deep reinforcement learning.
\newblock In \emph{International Conference on Learning Representations}, 2019.

\bibitem[Ernst et~al.(2005)Ernst, Geurts, and Wehenkel]{ernst2005tree}
Damien Ernst, Pierre Geurts, and Louis Wehenkel.
\newblock Tree-based batch mode reinforcement learning.
\newblock \emph{Journal of Machine Learning Research}, 6:\penalty0 503--556,
  2005.

\bibitem[Espeholt et~al.(2018)Espeholt, Soyer, Munos, Simonyan, Mnih, Ward,
  Doron, Firoiu, Harley, Dunning, Legg, and Kavukcuoglu]{espeholt2018impala}
Lasse Espeholt, Hubert Soyer, R{\'{e}}mi Munos, Karen Simonyan, Volodymyr Mnih,
  Tom Ward, Yotam Doron, Vlad Firoiu, Tim Harley, Iain Dunning, Shane Legg, and
  Koray Kavukcuoglu.
\newblock {IMPALA:} scalable distributed deep-rl with importance weighted
  actor-learner architectures.
\newblock In \emph{International Conference on Machine Learning}, 2018.

\bibitem[Eysenbach et~al.(2020)Eysenbach, Geng, Levine, and
  Salakhutdinov]{eysenbach2020rewriting}
Benjamin Eysenbach, Xinyang Geng, Sergey Levine, and Ruslan Salakhutdinov.
\newblock Rewriting history with inverse rl: Hindsight inference for policy
  improvement.
\newblock \emph{arXiv preprint arXiv:2002.11089}, 2020.

\bibitem[Finn and Levine(2017)]{finn2017deep}
Chelsea Finn and Sergey Levine.
\newblock Deep visual foresight for planning robot motion.
\newblock In \emph{2017 IEEE International Conference on Robotics and
  Automation (ICRA)}, pages 2786--2793. IEEE, 2017.

\bibitem[Fu et~al.(2018)Fu, Singh, Ghosh, Yang, and Levine]{fu2018variational}
Justin Fu, Avi Singh, Dibya Ghosh, Larry Yang, and Sergey Levine.
\newblock Variational inverse control with events: A general framework for
  data-driven reward definition.
\newblock \emph{arXiv preprint arXiv:1805.11686}, 2018.

\bibitem[Fu et~al.(2020)Fu, Kumar, Nachum, Tucker, and Levine]{fu2020d4rl}
Justin Fu, Aviral Kumar, Ofir Nachum, George Tucker, and Sergey Levine.
\newblock D4rl: Datasets for deep data-driven reinforcement learning, 2020.

\bibitem[Fujimoto et~al.(2018{\natexlab{a}})Fujimoto, Meger, and
  Precup]{fujimoto2018off}
Scott Fujimoto, David Meger, and Doina Precup.
\newblock Off-policy deep reinforcement learning without exploration.
\newblock \emph{arXiv preprint arXiv:1812.02900}, 2018{\natexlab{a}}.

\bibitem[Fujimoto et~al.(2018{\natexlab{b}})Fujimoto, Van~Hoof, and
  Meger]{fujimoto2018addressing}
Scott Fujimoto, Herke Van~Hoof, and David Meger.
\newblock Addressing function approximation error in actor-critic methods.
\newblock \emph{arXiv preprint arXiv:1802.09477}, 2018{\natexlab{b}}.

\bibitem[Garcin et~al.(2014)Garcin, Faltings, Donatsch, Alazzawi, Bruttin, and
  Huber]{garcin2014offline}
Florent Garcin, Boi Faltings, Olivier Donatsch, Ayar Alazzawi, Christophe
  Bruttin, and Amr Huber.
\newblock Offline and online evaluation of news recommender systems at
  swissinfo. ch.
\newblock In \emph{Proceedings of the 8th ACM Conference on Recommender
  systems}, pages 169--176, 2014.

\bibitem[Ghosh et~al.(2017)Ghosh, Singh, Rajeswaran, Kumar, and
  Levine]{ghosh2017divide}
Dibya Ghosh, Avi Singh, Aravind Rajeswaran, Vikash Kumar, and Sergey Levine.
\newblock Divide-and-conquer reinforcement learning.
\newblock \emph{arXiv preprint arXiv:1711.09874}, 2017.

\bibitem[Guez et~al.(2008)Guez, Vincent, Avoli, and Pineau]{guez2008adaptive}
Arthur Guez, Robert~D Vincent, Massimo Avoli, and Joelle Pineau.
\newblock Adaptive treatment of epilepsy via batch-mode reinforcement learning.
\newblock In \emph{AAAI}, pages 1671--1678, 2008.

\bibitem[Gulcehre et~al.(2020)Gulcehre, Wang, Novikov, Paine, Colmenarejo,
  Zolna, Agarwal, Merel, Mankowitz, Paduraru, et~al.]{gulcehre2020rl}
Caglar Gulcehre, Ziyu Wang, Alexander Novikov, Tom~Le Paine, Sergio~G{\'o}mez
  Colmenarejo, Konrad Zolna, Rishabh Agarwal, Josh Merel, Daniel Mankowitz,
  Cosmin Paduraru, et~al.
\newblock Rl unplugged: Benchmarks for offline reinforcement learning.
\newblock \emph{arXiv preprint arXiv:2006.13888}, 2020.

\bibitem[Haarnoja et~al.(2018)Haarnoja, Zhou, Abbeel, and
  Levine]{haarnoja2018soft}
Tuomas Haarnoja, Aurick Zhou, Pieter Abbeel, and Sergey Levine.
\newblock Soft actor-critic: Off-policy maximum entropy deep reinforcement
  learning with a stochastic actor.
\newblock \emph{arXiv preprint arXiv:1801.01290}, 2018.

\bibitem[Hessel et~al.(2019)Hessel, Soyer, Espeholt, Czarnecki, Schmitt, and
  van Hasselt]{hessel2019popart}
Matteo Hessel, Hubert Soyer, Lasse Espeholt, Wojciech Czarnecki, Simon Schmitt,
  and Hado van Hasselt.
\newblock Multi-task deep reinforcement learning with popart.
\newblock In \emph{Proceedings of the AAAI Conference on Artificial
  Intelligence}, volume~33, 2019.

\bibitem[Huang et~al.(2019)Huang, Liu, and Su]{huang2019mapping}
Zhiao Huang, Fangchen Liu, and Hao Su.
\newblock Mapping state space using landmarks for universal goal reaching.
\newblock \emph{Advances in Neural Information Processing Systems},
  32:\penalty0 1942--1952, 2019.

\bibitem[Jaques et~al.(2019)Jaques, Ghandeharioun, Shen, Ferguson, Lapedriza,
  Jones, Gu, and Picard]{jaques2019way}
Natasha Jaques, Asma Ghandeharioun, Judy~Hanwen Shen, Craig Ferguson, Agata
  Lapedriza, Noah Jones, Shixiang Gu, and Rosalind Picard.
\newblock Way off-policy batch deep reinforcement learning of implicit human
  preferences in dialog.
\newblock \emph{arXiv preprint arXiv:1907.00456}, 2019.

\bibitem[Jaques et~al.(2020)Jaques, Shen, Ghandeharioun, Ferguson, Lapedriza,
  Jones, Gu, and Picard]{jaques2020human}
Natasha Jaques, Judy~Hanwen Shen, Asma Ghandeharioun, Craig Ferguson, Agata
  Lapedriza, Noah Jones, Shixiang~Shane Gu, and Rosalind Picard.
\newblock Human-centric dialog training via offline reinforcement learning.
\newblock \emph{arXiv preprint arXiv:2010.05848}, 2020.

\bibitem[Kaelbling(1993)]{kaelbling1993learning}
Leslie~Pack Kaelbling.
\newblock Learning to achieve goals.
\newblock In \emph{IJCAI}, pages 1094--1099. Citeseer, 1993.

\bibitem[Kalashnikov et~al.(2018)Kalashnikov, Irpan, Pastor, Ibarz, Herzog,
  Jang, Quillen, Holly, Kalakrishnan, Vanhoucke,
  et~al.]{kalashnikov2018scalable}
Dmitry Kalashnikov, Alex Irpan, Peter Pastor, Julian Ibarz, Alexander Herzog,
  Eric Jang, Deirdre Quillen, Ethan Holly, Mrinal Kalakrishnan, Vincent
  Vanhoucke, et~al.
\newblock Scalable deep reinforcement learning for vision-based robotic
  manipulation.
\newblock In \emph{Conference on Robot Learning}, pages 651--673. PMLR, 2018.

\bibitem[Kalashnikov et~al.(2021)Kalashnikov, Varley, Chebotar, Swanson,
  Jonschkowski, Finn, Levine, and Hausman]{kalashnikov2021mt}
Dmitry Kalashnikov, Jacob Varley, Yevgen Chebotar, Benjamin Swanson, Rico
  Jonschkowski, Chelsea Finn, Sergey Levine, and Karol Hausman.
\newblock Mt-opt: Continuous multi-task robotic reinforcement learning at
  scale.
\newblock \emph{arXiv preprint arXiv:2104.08212}, 2021.

\bibitem[Kidambi et~al.(2020)Kidambi, Rajeswaran, Netrapalli, and
  Joachims]{kidambi2020morel}
Rahul Kidambi, Aravind Rajeswaran, Praneeth Netrapalli, and Thorsten Joachims.
\newblock Morel: Model-based offline reinforcement learning.
\newblock \emph{arXiv preprint arXiv:2005.05951}, 2020.

\bibitem[Killian et~al.(2020)Killian, Zhang, Subramanian, Fatemi, and
  Ghassemi]{killian2020empirical}
Taylor~W Killian, Haoran Zhang, Jayakumar Subramanian, Mehdi Fatemi, and
  Marzyeh Ghassemi.
\newblock An empirical study of representation learning for reinforcement
  learning in healthcare.
\newblock \emph{arXiv preprint arXiv:2011.11235}, 2020.

\bibitem[Kostrikov et~al.(2021)Kostrikov, Tompson, Fergus, and
  Nachum]{kostrikov2021offline}
Ilya Kostrikov, Jonathan Tompson, Rob Fergus, and Ofir Nachum.
\newblock Offline reinforcement learning with fisher divergence critic
  regularization.
\newblock \emph{arXiv preprint arXiv:2103.08050}, 2021.

\bibitem[Kumar et~al.(2019)Kumar, Fu, Soh, Tucker, and
  Levine]{kumar2019stabilizing}
Aviral Kumar, Justin Fu, Matthew Soh, George Tucker, and Sergey Levine.
\newblock Stabilizing off-policy q-learning via bootstrapping error reduction.
\newblock In \emph{Advances in Neural Information Processing Systems}, pages
  11761--11771, 2019.

\bibitem[Kumar et~al.(2020{\natexlab{a}})Kumar, Gupta, and
  Levine]{kumar2020discor}
Aviral Kumar, Abhishek Gupta, and Sergey Levine.
\newblock Discor: Corrective feedback in reinforcement learning via
  distribution correction.
\newblock \emph{arXiv preprint arXiv:2003.07305}, 2020{\natexlab{a}}.

\bibitem[Kumar et~al.(2020{\natexlab{b}})Kumar, Zhou, Tucker, and
  Levine]{kumar2020conservative}
Aviral Kumar, Aurick Zhou, George Tucker, and Sergey Levine.
\newblock Conservative q-learning for offline reinforcement learning.
\newblock \emph{arXiv preprint arXiv:2006.04779}, 2020{\natexlab{b}}.

\bibitem[Lange et~al.(2012)Lange, Gabel, and Riedmiller]{LangeGR12}
Sascha Lange, Thomas Gabel, and Martin~A. Riedmiller.
\newblock Batch reinforcement learning.
\newblock In \emph{Reinforcement Learning}, volume~12. Springer, 2012.

\bibitem[Laroche et~al.(2019)Laroche, Trichelair, and
  Des~Combes]{laroche2019safe}
Romain Laroche, Paul Trichelair, and Remi~Tachet Des~Combes.
\newblock Safe policy improvement with baseline bootstrapping.
\newblock In \emph{International Conference on Machine Learning}, pages
  3652--3661. PMLR, 2019.

\bibitem[Lee et~al.(2021)Lee, Lee, and Kim]{lee2021representation}
Byung-Jun Lee, Jongmin Lee, and Kee-Eung Kim.
\newblock Representation balancing offline model-based reinforcement learning.
\newblock In \emph{International Conference on Learning Representations}, 2021.
\newblock URL \url{https://openreview.net/forum?id=QpNz8r_Ri2Y}.

\bibitem[Levine et~al.(2020)Levine, Kumar, Tucker, and Fu]{levine2020offline}
Sergey Levine, Aviral Kumar, George Tucker, and Justin Fu.
\newblock Offline reinforcement learning: Tutorial, review, and perspectives on
  open problems.
\newblock \emph{arXiv preprint arXiv:2005.01643}, 2020.

\bibitem[Li et~al.(2020)Li, Pinto, and Abbeel]{li2020generalized}
Alexander~C Li, Lerrel Pinto, and Pieter Abbeel.
\newblock Generalized hindsight for reinforcement learning.
\newblock \emph{arXiv preprint arXiv:2002.11708}, 2020.

\bibitem[Li et~al.(2019)Li, Vuong, Liu, Liu, Ciosek, Ross, Christensen, and
  Su]{li2019multi}
Jiachen Li, Quan Vuong, Shuang Liu, Minghua Liu, Kamil Ciosek, Keith Ross,
  Henrik~Iskov Christensen, and Hao Su.
\newblock Multi-task batch reinforcement learning with metric learning.
\newblock \emph{arXiv preprint arXiv:1909.11373}, 2019.

\bibitem[Lillicrap et~al.(2015)Lillicrap, Hunt, Pritzel, Heess, Erez, Tassa,
  Silver, and Wierstra]{lillicrap2015continuous}
Timothy~P Lillicrap, Jonathan~J Hunt, Alexander Pritzel, Nicolas Heess, Tom
  Erez, Yuval Tassa, David Silver, and Daan Wierstra.
\newblock Continuous control with deep reinforcement learning.
\newblock \emph{arXiv preprint arXiv:1509.02971}, 2015.

\bibitem[Lin et~al.(2019)Lin, Baweja, and Held]{lin2019reinforcement}
Xingyu Lin, Harjatin~Singh Baweja, and David Held.
\newblock Reinforcement learning without ground-truth state.
\newblock \emph{arXiv preprint arXiv:1905.07866}, 2019.

\bibitem[Liu et~al.(2019{\natexlab{a}})Liu, Trott, Socher, and
  Xiong]{liu2019competitive}
Hao Liu, Alexander Trott, Richard Socher, and Caiming Xiong.
\newblock Competitive experience replay.
\newblock \emph{arXiv preprint arXiv:1902.00528}, 2019{\natexlab{a}}.

\bibitem[Liu et~al.(2019{\natexlab{b}})Liu, Swaminathan, Agarwal, and
  Brunskill]{LiuSAB19}
Yao Liu, Adith Swaminathan, Alekh Agarwal, and Emma Brunskill.
\newblock Off-policy policy gradient with state distribution correction.
\newblock \emph{CoRR}, abs/1904.08473, 2019{\natexlab{b}}.

\bibitem[Liu et~al.(2020)Liu, Swaminathan, Agarwal, and
  Brunskill]{liu2020provably}
Yao Liu, Adith Swaminathan, Alekh Agarwal, and Emma Brunskill.
\newblock Provably good batch reinforcement learning without great exploration.
\newblock \emph{arXiv preprint arXiv:2007.08202}, 2020.

\bibitem[Lynch and Sermanet(2020)]{lynch2020grounding}
Corey Lynch and Pierre Sermanet.
\newblock Grounding language in play.
\newblock \emph{arXiv preprint arXiv:2005.07648}, 2020.

\bibitem[Mandlekar et~al.(2020)Mandlekar, Ramos, Boots, Savarese, Fei-Fei,
  Garg, and Fox]{mandlekar2020iris}
Ajay Mandlekar, Fabio Ramos, Byron Boots, Silvio Savarese, Li~Fei-Fei, Animesh
  Garg, and Dieter Fox.
\newblock Iris: Implicit reinforcement without interaction at scale for
  learning control from offline robot manipulation data.
\newblock In \emph{2020 IEEE International Conference on Robotics and
  Automation (ICRA)}, pages 4414--4420. IEEE, 2020.

\bibitem[Matsushima et~al.(2020)Matsushima, Furuta, Matsuo, Nachum, and
  Gu]{matsushima2020deployment}
Tatsuya Matsushima, Hiroki Furuta, Yutaka Matsuo, Ofir Nachum, and Shixiang Gu.
\newblock Deployment-efficient reinforcement learning via model-based offline
  optimization.
\newblock \emph{arXiv preprint arXiv:2006.03647}, 2020.

\bibitem[Nachum et~al.(2019)Nachum, Dai, Kostrikov, Chow, Li, and
  Schuurmans]{nachum2019algaedice}
Ofir Nachum, Bo~Dai, Ilya Kostrikov, Yinlam Chow, Lihong Li, and Dale
  Schuurmans.
\newblock Algaedice: Policy gradient from arbitrary experience.
\newblock \emph{arXiv preprint arXiv:1912.02074}, 2019.

\bibitem[Nair et~al.(2018)Nair, Pong, Dalal, Bahl, Lin, and
  Levine]{nair2018visual}
Ashvin Nair, Vitchyr Pong, Murtaza Dalal, Shikhar Bahl, Steven Lin, and Sergey
  Levine.
\newblock Visual reinforcement learning with imagined goals.
\newblock \emph{arXiv preprint arXiv:1807.04742}, 2018.

\bibitem[Parisotto et~al.(2015)Parisotto, Ba, and
  Salakhutdinov]{parisotto2015actor}
Emilio Parisotto, Jimmy~Lei Ba, and Ruslan Salakhutdinov.
\newblock Actor-mimic: Deep multitask and transfer reinforcement learning.
\newblock \emph{arXiv preprint arXiv:1511.06342}, 2015.

\bibitem[Peng et~al.(2019)Peng, Kumar, Zhang, and Levine]{peng2019advantage}
Xue~Bin Peng, Aviral Kumar, Grace Zhang, and Sergey Levine.
\newblock Advantage-weighted regression: Simple and scalable off-policy
  reinforcement learning.
\newblock \emph{arXiv preprint arXiv:1910.00177}, 2019.

\bibitem[Pong et~al.(2018)Pong, Gu, Dalal, and Levine]{pong2018temporal}
Vitchyr Pong, Shixiang Gu, Murtaza Dalal, and Sergey Levine.
\newblock Temporal difference models: Model-free deep rl for model-based
  control.
\newblock \emph{arXiv preprint arXiv:1802.09081}, 2018.

\bibitem[Precup et~al.(2001)Precup, Sutton, and Dasgupta]{precup2001off}
Doina Precup, Richard~S Sutton, and Sanjoy Dasgupta.
\newblock Off-policy temporal-difference learning with function approximation.
\newblock In \emph{ICML}, pages 417--424, 2001.

\bibitem[Rafailov et~al.(2021)Rafailov, Yu, Rajeswaran, and
  Finn]{Rafailov2020LOMPO}
Rafael Rafailov, Tianhe Yu, A.~Rajeswaran, and Chelsea Finn.
\newblock Offline reinforcement learning from images with latent space models.
\newblock \emph{Learning for Decision Making and Control (L4DC)}, 2021.

\bibitem[Riedmiller(2005)]{riedmiller2005neural}
Martin Riedmiller.
\newblock Neural fitted q iteration--first experiences with a data efficient
  neural reinforcement learning method.
\newblock In \emph{European Conference on Machine Learning}, pages 317--328.
  Springer, 2005.

\bibitem[Rusu et~al.(2015)Rusu, Colmenarejo, Gulcehre, Desjardins, Kirkpatrick,
  Pascanu, Mnih, Kavukcuoglu, and Hadsell]{rusu2015policy}
Andrei~A Rusu, Sergio~Gomez Colmenarejo, Caglar Gulcehre, Guillaume Desjardins,
  James Kirkpatrick, Razvan Pascanu, Volodymyr Mnih, Koray Kavukcuoglu, and
  Raia Hadsell.
\newblock Policy distillation.
\newblock \emph{arXiv preprint arXiv:1511.06295}, 2015.

\bibitem[Schaul et~al.(2015)Schaul, Horgan, Gregor, and
  Silver]{schaul2015universal}
Tom Schaul, Daniel Horgan, Karol Gregor, and David Silver.
\newblock Universal value function approximators.
\newblock In \emph{International conference on machine learning}, pages
  1312--1320. PMLR, 2015.

\bibitem[Schaul et~al.(2019)Schaul, Borsa, Modayil, and Pascanu]{schaul2019ray}
Tom Schaul, Diana Borsa, Joseph Modayil, and Razvan Pascanu.
\newblock Ray interference: a source of plateaus in deep reinforcement
  learning.
\newblock \emph{arXiv preprint arXiv:1904.11455}, 2019.

\bibitem[Schulman et~al.(2015)Schulman, Levine, Abbeel, Jordan, and
  Moritz]{schulman2015trust}
John Schulman, Sergey Levine, Pieter Abbeel, Michael Jordan, and Philipp
  Moritz.
\newblock Trust region policy optimization.
\newblock In \emph{International conference on machine learning}, pages
  1889--1897, 2015.

\bibitem[Sharma et~al.(2018)Sharma, Mohan, Pinto, and
  Gupta]{sharma2018multiple}
Pratyusha Sharma, Lekha Mohan, Lerrel Pinto, and Abhinav Gupta.
\newblock Multiple interactions made easy (mime): Large scale demonstrations
  data for imitation.
\newblock In \emph{Conference on robot learning}, pages 906--915. PMLR, 2018.

\bibitem[Shortreed et~al.(2011)Shortreed, Laber, Lizotte, Stroup, Pineau, and
  Murphy]{shortreed2011informing}
Susan~M Shortreed, Eric Laber, Daniel~J Lizotte, T~Scott Stroup, Joelle Pineau,
  and Susan~A Murphy.
\newblock Informing sequential clinical decision-making through reinforcement
  learning: an empirical study.
\newblock \emph{Machine learning}, 84\penalty0 (1-2):\penalty0 109--136, 2011.

\bibitem[Siegel et~al.(2020)Siegel, Springenberg, Berkenkamp, Abdolmaleki,
  Neunert, Lampe, Hafner, and Riedmiller]{siegel2020keep}
Noah~Y Siegel, Jost~Tobias Springenberg, Felix Berkenkamp, Abbas Abdolmaleki,
  Michael Neunert, Thomas Lampe, Roland Hafner, and Martin Riedmiller.
\newblock Keep doing what worked: Behavioral modelling priors for offline
  reinforcement learning.
\newblock \emph{arXiv preprint arXiv:2002.08396}, 2020.

\bibitem[Singh et~al.(2020{\natexlab{a}})Singh, Liu, Zhou, Yu, Rhinehart, and
  Levine]{singh2020parrot}
Avi Singh, Huihan Liu, Gaoyue Zhou, Albert Yu, Nicholas Rhinehart, and Sergey
  Levine.
\newblock Parrot: Data-driven behavioral priors for reinforcement learning.
\newblock \emph{arXiv preprint arXiv:2011.10024}, 2020{\natexlab{a}}.

\bibitem[Singh et~al.(2020{\natexlab{b}})Singh, Yu, Yang, Zhang, Kumar, and
  Levine]{singh2020cog}
Avi Singh, Albert Yu, Jonathan Yang, Jesse Zhang, Aviral Kumar, and Sergey
  Levine.
\newblock Cog: Connecting new skills to past experience with offline
  reinforcement learning.
\newblock \emph{arXiv preprint arXiv:2010.14500}, 2020{\natexlab{b}}.

\bibitem[Sodhani et~al.(2021)Sodhani, Zhang, and Pineau]{sodhani2021multi}
Shagun Sodhani, Amy Zhang, and Joelle Pineau.
\newblock Multi-task reinforcement learning with context-based representations.
\newblock \emph{arXiv preprint arXiv:2102.06177}, 2021.

\bibitem[Strehl et~al.(2010)Strehl, Langford, Kakade, and
  Li]{strehl2010learning}
Alex Strehl, John Langford, Sham Kakade, and Lihong Li.
\newblock Learning from logged implicit exploration data.
\newblock \emph{arXiv preprint arXiv:1003.0120}, 2010.

\bibitem[Sun et~al.(2019)Sun, Li, Liu, Lin, and Zhou]{sun2019policy}
Hao Sun, Zhizhong Li, Xiaotong Liu, Dahua Lin, and Bolei Zhou.
\newblock Policy continuation with hindsight inverse dynamics.
\newblock \emph{arXiv preprint arXiv:1910.14055}, 2019.

\bibitem[Sutton et~al.(2016)Sutton, Mahmood, and White]{sutton2016emphatic}
Richard~S Sutton, A~Rupam Mahmood, and Martha White.
\newblock An emphatic approach to the problem of off-policy temporal-difference
  learning.
\newblock \emph{The Journal of Machine Learning Research}, 17\penalty0
  (1):\penalty0 2603--2631, 2016.

\bibitem[Swaminathan and Joachims(2015)]{SwaminathanJ15}
Adith Swaminathan and Thorsten Joachims.
\newblock Batch learning from logged bandit feedback through counterfactual
  risk minimization.
\newblock \emph{J. Mach. Learn. Res}, 16:\penalty0 1731--1755, 2015.

\bibitem[Swazinna et~al.(2020)Swazinna, Udluft, and
  Runkler]{swazinna2020overcoming}
Phillip Swazinna, Steffen Udluft, and Thomas Runkler.
\newblock Overcoming model bias for robust offline deep reinforcement learning.
\newblock \emph{arXiv preprint arXiv:2008.05533}, 2020.

\bibitem[Teh et~al.(2017)Teh, Bapst, Czarnecki, Quan, Kirkpatrick, Hadsell,
  Heess, and Pascanu]{teh2017distral}
Yee~Whye Teh, Victor Bapst, Wojciech~Marian Czarnecki, John Quan, James
  Kirkpatrick, Raia Hadsell, Nicolas Heess, and Razvan Pascanu.
\newblock Distral: Robust multitask reinforcement learning.
\newblock \emph{arXiv preprint arXiv:1707.04175}, 2017.

\bibitem[Theocharous et~al.(2015)Theocharous, Thomas, and
  Ghavamzadeh]{theocharous2015ad}
Georgios Theocharous, Philip~S Thomas, and Mohammad Ghavamzadeh.
\newblock Ad recommendation systems for life-time value optimization.
\newblock In \emph{Proceedings of the 24th International Conference on World
  Wide Web}, pages 1305--1310, 2015.

\bibitem[Thomas et~al.(2017)Thomas, Theocharous, Ghavamzadeh, Durugkar, and
  Brunskill]{thomas2017predictive}
Philip~S Thomas, Georgios Theocharous, Mohammad Ghavamzadeh, Ishan Durugkar,
  and Emma Brunskill.
\newblock Predictive off-policy policy evaluation for nonstationary decision
  problems, with applications to digital marketing.
\newblock In \emph{AAAI}, pages 4740--4745, 2017.

\bibitem[Wang et~al.(2018)Wang, Zhang, He, and Zha]{Wang2018SupervisedRL}
L.~Wang, Wei Zhang, Xiaofeng He, and H.~Zha.
\newblock Supervised reinforcement learning with recurrent neural network for
  dynamic treatment recommendation.
\newblock \emph{Proceedings of the 24th ACM SIGKDD International Conference on
  Knowledge Discovery \& Data Mining}, 2018.

\bibitem[Wilson et~al.(2007)Wilson, Fern, Ray, and Tadepalli]{wilson2007multi}
Aaron Wilson, Alan Fern, Soumya Ray, and Prasad Tadepalli.
\newblock Multi-task reinforcement learning: a hierarchical bayesian approach.
\newblock In \emph{Proceedings of the 24th international conference on Machine
  learning}, pages 1015--1022, 2007.

\bibitem[Wu et~al.(2019)Wu, Tucker, and Nachum]{wu2019behavior}
Yifan Wu, George Tucker, and Ofir Nachum.
\newblock Behavior regularized offline reinforcement learning.
\newblock \emph{arXiv preprint arXiv:1911.11361}, 2019.

\bibitem[Xie et~al.(2018)Xie, Singh, Levine, and Finn]{xie2018few}
Annie Xie, Avi Singh, Sergey Levine, and Chelsea Finn.
\newblock Few-shot goal inference for visuomotor learning and planning.
\newblock In \emph{Conference on Robot Learning}, pages 40--52. PMLR, 2018.

\bibitem[Xie et~al.(2019)Xie, Ebert, Levine, and Finn]{xie2019improvisation}
Annie Xie, Frederik Ebert, Sergey Levine, and Chelsea Finn.
\newblock Improvisation through physical understanding: Using novel objects as
  tools with visual foresight.
\newblock \emph{Robotics: Science and Systems (RSS)}, 2019.

\bibitem[Xu et~al.(2020)Xu, Wu, Che, Tang, and Ye]{xu2020knowledge}
Zhiyuan Xu, Kun Wu, Zhengping Che, Jian Tang, and Jieping Ye.
\newblock Knowledge transfer in multi-task deep reinforcement learning for
  continuous control.
\newblock 2020.

\bibitem[Yang and Nachum(2021)]{yang2021representation}
Mengjiao Yang and Ofir Nachum.
\newblock Representation matters: Offline pretraining for sequential decision
  making.
\newblock \emph{arXiv preprint arXiv:2102.05815}, 2021.

\bibitem[Yang et~al.(2021)Yang, Lyu, Yang, Ya, Luo, Luo, Li, and
  Li]{yang2021bias}
Rui Yang, Jiafei Lyu, Yu~Yang, Jiangpeng Ya, Feng Luo, Dijun Luo, Lanqing Li,
  and Xiu Li.
\newblock Bias-reduced multi-step hindsight experience replay.
\newblock \emph{arXiv preprint arXiv:2102.12962}, 2021.

\bibitem[Yang et~al.(2020)Yang, Xu, Wu, and Wang]{yang2020multi}
Ruihan Yang, Huazhe Xu, Yi~Wu, and Xiaolong Wang.
\newblock Multi-task reinforcement learning with soft modularization.
\newblock \emph{arXiv preprint arXiv:2003.13661}, 2020.

\bibitem[Yu et~al.(2020{\natexlab{a}})Yu, Kumar, Gupta, Levine, Hausman, and
  Finn]{yu2020gradient}
Tianhe Yu, Saurabh Kumar, Abhishek Gupta, Sergey Levine, Karol Hausman, and
  Chelsea Finn.
\newblock Gradient surgery for multi-task learning.
\newblock \emph{arXiv preprint arXiv:2001.06782}, 2020{\natexlab{a}}.

\bibitem[Yu et~al.(2020{\natexlab{b}})Yu, Quillen, He, Julian, Hausman, Finn,
  and Levine]{yu2020metaworld}
Tianhe Yu, Deirdre Quillen, Zhanpeng He, Ryan Julian, Karol Hausman, Chelsea
  Finn, and Sergey Levine.
\newblock Meta-world: A benchmark and evaluation for multi-task and meta
  reinforcement learning.
\newblock In \emph{Conference on Robot Learning}, pages 1094--1100. PMLR,
  2020{\natexlab{b}}.

\bibitem[Yu et~al.(2020{\natexlab{c}})Yu, Thomas, Yu, Ermon, Zou, Levine, Finn,
  and Ma]{yu2020mopo}
Tianhe Yu, Garrett Thomas, Lantao Yu, Stefano Ermon, James Zou, Sergey Levine,
  Chelsea Finn, and Tengyu Ma.
\newblock Mopo: Model-based offline policy optimization.
\newblock \emph{arXiv preprint arXiv:2005.13239}, 2020{\natexlab{c}}.

\bibitem[Yu et~al.(2021)Yu, Kumar, Rafailov, Rajeswaran, Levine, and
  Finn]{yu2021combo}
Tianhe Yu, Aviral Kumar, Rafael Rafailov, Aravind Rajeswaran, Sergey Levine,
  and Chelsea Finn.
\newblock Combo: Conservative offline model-based policy optimization.
\newblock \emph{arXiv preprint arXiv:2102.08363}, 2021.

\bibitem[Zhou et~al.(2020)Zhou, Bajracharya, and Held]{zhou2020plas}
Wenxuan Zhou, Sujay Bajracharya, and David Held.
\newblock Plas: Latent action space for offline reinforcement learning.
\newblock \emph{arXiv preprint arXiv:2011.07213}, 2020.

\end{thebibliography}
\bibliographystyle{plainnat}

\newpage
\appendix
\onecolumn
\part*{Appendices}

\section{Analysis of \methodname}
\label{app:proofs}

In this section, we will analyze the key idea behind our method \methodname\ (Section~\ref{sec:method}) and show that the abstract version of our method (Equation~\ref{eqn:optimize_behavior}) provides better policy improvement guarantees than na\"ive data sharing and that the practical version of our method (Equation~\ref{eqn:method}) approximates Equation~\ref{eqn:optimize_behavior} resulting in an effective practical algorithm.

\subsection{Analysis of the Algorithm in Equation~\ref{eqn:optimize_behavior}}
We begin with analyzing Equation~\ref{eqn:optimize_behavior},
which is used to derive the practical variant of our method, \methodname. We build on the analysis of safe-policy improvement guarantees of conventional offline RL algorithms~\citep{laroche2019safe,kumar2020conservative} and show that data sharing using \methodname\ attains better guarantees in the worst case. To begin the analysis, we introduce some notation and prior results that we will directly compare to.

\textbf{Notation and prior results.}
Let $\pi_\beta(\ba|\bs)$ denote the behavior policy for task $i$ (note that index $i$ was dropped from $\pi_\beta(\ba|\bs; i)$ for brevity). The dataset, $\mathcal{D}_i$ 
is generated from the marginal state-action distribution of $\pi_\beta$, i.e., $\mathcal{D} \sim d^{\pi_\beta}(\bs) \pi_\beta(\ba|\bs)$. We define $d^{\pi}_{\mathcal{D}}$ as the state marginal distribution introduced by the dataset $\mathcal{D}$ under $\pi$. Let $D_\text{CQL}(p, q)$ denote the following distance between two distributions $p(\bx)$ and $q(\bx)$ with equal support $\mathcal{X}$:
\begin{equation*}
    D_\text{CQL}(p, q) := \sum_{\bx \in \mathcal{X}} p(\bx) \left(\frac{p(\bx)}{q(\bx)} - 1 \right).
\end{equation*}
Unless otherwise mentioned, we will drop the subscript ``CQL'' from $D_\text{CQL}$ and use $D$ and $D_\text{CQL}$ interchangeably. Prior works~\citep{kumar2020conservative} have shown that the optimal policy $\pi^*_{i}$ that optimizes Equation~\ref{eqn:generic_offline_rl} attains a high probability safe-policy improvement guarantee, i.e., $J(\pi^*_i) \geq J(\pi_\beta) - \zeta_i$, where $\zeta_i$ is:
\begin{equation}
    \label{eqn:single_task_guarantee}
    \zeta_i =  \mathcal{O}\left(\frac{1}{(1 - \gamma)^2}\right) \mathbb{E}_{\bs \sim d^{\pi^*_i}_{\mathcal{D}_i}}\left[\sqrt{\frac{D_{\text{CQL}}(\pi^*_i, \pi_\beta)(\bs) + 1}{|\mathcal{D}_i(\bs)|}} \right] + \alpha D(\pi^*_i, \pi_\beta).
\end{equation}
The first term in Equation~\ref{eqn:single_task_guarantee} corresponds to the decrease in performance due to sampling error and this term is high when the single-task optimal policy $\pi^*_i$ visits rarely observed states in the dataset $\mathcal{D}_i$ and/or when the divergence from the behavior policy $\pi_\beta$ is higher under the states visited by the single-task policy $\bs \sim d^{\pi^*_i}_{\mathcal{D}_i}$. 

Let $J_\mathcal{D}(\pi)$ denote the return of a policy $\pi$ in the empirical MDP induced by the transitions in the dataset $\mathcal{D}$. Further, let us assume that optimizing Equation~\ref{eqn:optimize_behavior} gives us the following policies:
\begin{equation}
    \label{eqn:eqn4_detailed}
    \pi^*(\ba|\bs), \pi^*_\beta(\ba|\bs) := \arg \max_{\pi, \pi_\beta \in \Pi_{\text{relabel}}}~~ \underbrace{J_{\mathcal{D}^\mathrm{eff}_i}(\pi) - \alpha D(\pi, \pi_\beta)}_{:= f(\pi, \pi_\beta; \mathcal{D}^\mathrm{eff}_i)},
\end{equation}
where the optimized behavior policy $\pi^*_\beta$ is constrained to lie in a set of all policies that can be obtained via relabeling, $\Pi_\text{relabel}$, and the dataset, $\mathcal{D}^\mathrm{eff}_i$ is sampled according to the state-action marginal distribution of $\pi^*_\beta$, i.e., $\mathcal{D}^\mathrm{eff}_i \sim d^{\pi^*_\beta}(\bs, \ba)$. Additionally, for convenience, define, $f(\pi_1, \pi_2; \mathcal{D}) := J_\mathcal{D}(\pi_1) - \alpha D(\pi_1, \pi_2)$ for any two policies $\pi_1$ and $\pi_2$, and a given dataset $\mathcal{D}$. 

We now show the following result for \methodname:

\begin{proposition}[Proposition~\ref{prop:spi_thm} restated] 
\label{prop:spi}
Let $\pi^*(\ba|\bs)$ be the policy obtained by optimizing Equation~\ref{eqn:optimize_behavior}, and let $\pi_\beta(\ba|\bs)$ be the behavior policy for $\mathcal{D}_i$. Then, w.h.p. $\geq 1 - \delta$, $\pi^*$ is a $\zeta$-safe policy improvement over $\pi_\beta$, i.e., $J(\pi^*) \geq J(\pi_\beta) - \zeta$, where $\zeta$ is given by:
\begin{equation*}
    \zeta = \mathcal{O}\left(\frac{1}{(1 - \gamma)^2}\right) \mathbb{E}_{\bs \sim d^{\pi^*}_{\mathcal{D}^\mathrm{eff}_i}}\left[\sqrt{\frac{D_{\text{CQL}}(\pi^*, \pi_\beta^*)(\bs) + 1}{|\mathcal{D}^\mathrm{eff}_i(\bs)|}} \right] -  \left[\alpha D(\pi^*, \pi_\beta^*) + \underbrace{J(\pi^*_\beta) - J(\pi_\beta)}_{\text{(a)}} \right],
\end{equation*}
where $\mathcal{D}^\mathrm{eff}_i \sim d^{\pi_\beta^*}(\bs)$ and $\pi^*_\beta(\ba|\bs)$ denotes the policy $\pi \in \Pi_{\text{relabel}}$ that maximizes Equation~\ref{eqn:optimize_behavior}. 
\end{proposition}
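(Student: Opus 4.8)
The plan is to reduce the claim to the single-task safe-policy-improvement guarantee of Equation~\ref{eqn:single_task_guarantee}, but applied with respect to the \emph{optimized} behavior policy $\pi^*_\beta$ and its induced effective dataset $\mathcal{D}^\mathrm{eff}_i$ rather than the original $\pi_\beta$ and $\mathcal{D}_i$. The observation that makes this possible is that, by construction in Equation~\ref{eqn:eqn4_detailed}, $\mathcal{D}^\mathrm{eff}_i \sim d^{\pi^*_\beta}(\bs)\,\pi^*_\beta(\ba|\bs)$, so the pair $(\pi^*_\beta, \mathcal{D}^\mathrm{eff}_i)$ stands in exactly the relationship that the single-task bound requires of a behavior policy and its dataset. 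Treating $\pi^*_\beta$ as the effective behavior policy and $\pi^*$ as the learned policy, Equation~\ref{eqn:single_task_guarantee} then yields the analogue bound $J(\pi^*) \geq J(\pi^*_\beta) - \mathcal{O}(1/(1-\gamma)^2)\,\mathbb{E}_{\bs \sim d^{\pi^*}_{\mathcal{D}^\mathrm{eff}_i}}[\sqrt{(D_{\text{CQL}}(\pi^*,\pi^*_\beta)(\bs)+1)/|\mathcal{D}^\mathrm{eff}_i(\bs)|}] + \alpha D(\pi^*, \pi^*_\beta)$, i.e. the same statement with $\pi_\beta \mapsto \pi^*_\beta$ and $\mathcal{D}_i \mapsto \mathcal{D}^\mathrm{eff}_i$ everywhere.

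Given that, the second step is purely algebraic: I would add and subtract $J(\pi_\beta)$ on the right-hand side, writing $J(\pi^*_\beta) = J(\pi_\beta) + (J(\pi^*_\beta) - J(\pi_\beta))$, and then collect the sampling-error term, the $\alpha D(\pi^*,\pi^*_\beta)$ term, and the newly introduced improvement term (a) $= J(\pi^*_\beta) - J(\pi_\beta)$ into a single quantity. Rearranging the inequality into the form $J(\pi^*) \geq J(\pi_\beta) - \zeta$ reproduces exactly the $\zeta$ stated in the proposition. No further estimation is needed once the single-task bound has been invoked.

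The main obstacle is justifying the application of the single-task bound to $\pi^*_\beta$, because $\pi^*_\beta$ is \emph{data-dependent}: it is itself the maximizer of the random objective $f(\pi,\pi_\beta;\mathcal{D}^\mathrm{eff}_i)$ in Equation~\ref{eqn:eqn4_detailed}, whereas the concentration argument underlying Equation~\ref{eqn:single_task_guarantee} is stated for a \emph{fixed} behavior policy. The way I would handle this is to exploit the fact that relabeling only decides which of the finitely many transitions from the other tasks to append to $\mathcal{D}_i$, so the candidate set $\Pi_{\text{relabel}}$ of achievable effective behavior policies is finite. I would therefore make the single-task guarantee hold uniformly over $\Pi_{\text{relabel}}$ by a union bound: establish Equation~\ref{eqn:single_task_guarantee} at confidence $1 - \delta/|\Pi_{\text{relabel}}|$ for each candidate, so that with probability at least $1 - \delta$ the guarantee holds \emph{simultaneously} for every element of $\Pi_{\text{relabel}}$, in particular for the random optimizer $\pi^*_\beta$. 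The extra $\log|\Pi_{\text{relabel}}|$ factor incurred is of constant order in the sampling-error term and is absorbed into the ``$+1$'' inside the square root and the $\mathcal{O}(\cdot)$ constant, leaving the stated form of $\zeta$ unchanged.

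Finally, since the proposition is only useful insofar as this $\zeta$ improves on the no-sharing baseline $\zeta_i$ of Equation~\ref{eqn:single_task_guarantee}, I would close by noting the structural reasons it does so: because $\pi^*_\beta$ and $\mathcal{D}^\mathrm{eff}_i$ are chosen to maximize $f$, the numerator $D_{\text{CQL}}(\pi^*,\pi^*_\beta)$ is controlled while $|\mathcal{D}^\mathrm{eff}_i(\bs)| \geq |\mathcal{D}_i(\bs)|$ only enlarges the denominator, reducing sampling error, and the companion Lemma~\ref{lemma:a_gt_0} certifies that term (a) is nonnegative for sufficiently large $\alpha$. Collecting these observations is what shows $\zeta$ dominates both extremes, though I would defer the quantitative version of that comparison to the accompanying lemma rather than prove it inside this proposition.
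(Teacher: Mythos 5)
Your argument is correct and is essentially the paper's own proof in disguise: the paper combines the concentration bound of Equation~\ref{eqn:sampling_error} with the empirical improvement inequality in Equation~\ref{eqn:perf_increase} obtained from the joint optimality of $(\pi^*,\pi^*_\beta)$ in Equation~\ref{eqn:eqn4_detailed}, which is precisely the single-task guarantee re-derived with $\pi^*_\beta$ as the behavior policy and $\mathcal{D}^\mathrm{eff}_i$ as its dataset, followed by the same add-and-subtract of $J(\pi_\beta)$ to expose term (a). The only divergence is your union bound over $\Pi_{\text{relabel}}$ to license applying the concentration bound to the data-dependent optimizer $\pi^*_\beta$ --- a point the paper's proof silently ignores --- though your claim that the resulting $\log|\Pi_{\text{relabel}}|$ factor is constant-order is optimistic, since the number of admissible relabelings is generally exponential in the number of shareable transitions.
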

\begin{proof}
To prove this proposition, we shall quantify the lower-bound on the improvement in the policy performance due to Equation~\ref{eqn:eqn4_detailed} in the empirical MDP, and the potential drop in policy performance in the original MDP due to sampling error, and combine the terms to obtain our bound. First note that for any given policy $\pi$, and a dataset $\mathcal{D}^\mathrm{eff}_i$ with effective behavior policy $\pi_\beta(\ba|\bs)$, the following bound holds~\citep{kumar2020conservative}:
\begin{equation}
\label{eqn:sampling_error}
    J(\pi) \geq J_{\mathcal{D}^\mathrm{eff}_i} (\pi) - \mathcal{O}\left(\frac{1}{(1 - \gamma)^2}\right) \mathbb{E}_{\bs \sim d^{\pi}_{\mathcal{D}^\mathrm{eff}_i}}\left[\sqrt{\frac{D_{\text{CQL}}(\pi, \pi^*_\beta)(\bs) + 1}{|\mathcal{D}^\mathrm{eff}_i(\bs)|}} \right],
\end{equation}
where the $\mathcal{O}(\cdot)$ notation hides constants depending upon the concentration properties of the MDP~\citep{laroche2019safe} and $1 - \delta$, the probability with which the statement holds. Next, we provide guarantees on policy improvement in the empirical MDP. To see this, note that the following statements on $f(\pi_1, \pi_2; \mathcal{D})$ are true:
\begin{align}
    &\forall \pi' \in \Pi_{\text{relabel}}, ~~f(\pi^*, \pi^*_\beta; \mathcal{D}^\mathrm{eff}_i) \geq f(\pi', \pi', \mathcal{D}^\mathrm{eff}_i)\\
    \implies & \forall \pi' \in \Pi_{\text{relabel}},~~ J_{\mathcal{D}^\mathrm{eff}_i}(\pi^*) - \alpha D(\pi^*, \pi^*_\beta) \geq J_{\mathcal{D}^\mathrm{eff}_i}(\pi').
    \label{eqn:inequality_policy}
\end{align}
And additionally, we obtain:
\begin{align}
    &\forall \pi' \in \Pi_\text{relabel}, ~~f(\pi^*, \pi^*_\beta; \mathcal{D}^\mathrm{eff}_i) \geq f(\pi^*, \pi'; \mathcal{D}^\mathrm{eff}_i),\\
    \implies &\forall \pi' \in \Pi_\text{relabel},~~ D(\pi^*, \pi^*_\beta) \leq D(\pi^*, \pi').
\end{align}
Utilizing \ref{eqn:inequality_policy}, we obtain that:
\begin{equation}
\label{eqn:perf_increase}
   J_{\mathcal{D}^\mathrm{eff}_i}(\pi^*) - J_{\mathcal{D}^\mathrm{eff}_i}(\pi_\beta) \geq \alpha D(\pi^*, \pi^*_\beta) + \left( J_{\mathcal{D}^\mathrm{eff}_i}(\pi_\beta^*) - J_{\mathcal{D}^\mathrm{eff}_i}(\pi_\beta) \right) \approx \alpha D(\pi^*, \pi^*_\beta) + \left(J(\pi^*_\beta) - J(\pi_\beta) \right),
\end{equation}
where $\approx$ ignores sampling error terms that do not depend on distributional shift measures like $D_\text{CQL}$ because $\pi^*_\beta$ and $\pi_\beta$ are behavior policies which generated the complete and part of the dataset, and hence these terms are dominated by and subsumed into the sampling error for $\pi^*$. Combining Equations~\ref{eqn:sampling_error} (by setting $\pi = \pi^*$) and \ref{eqn:perf_increase}, we obtain the following safe-policy improvement guarantee for $\pi^*$: $J(\pi^*) - J(\pi_\beta) \geq \zeta$, where $\zeta$ is given by:
\begin{equation*}
    \zeta = \mathcal{O}\left(\frac{1}{(1 - \gamma)^2}\right) \mathbb{E}_{\bs \sim d^{\pi^*}_{\mathcal{D}^\mathrm{eff}_i}}\left[\sqrt{\frac{D_{\text{CQL}}(\pi^*, \pi_\beta^*)(\bs) + 1}{|\mathcal{D}^\mathrm{eff}_i(\bs)|}} \right] -  \left[\alpha D(\pi^*, \pi_\beta^*) + \underbrace{J(\pi^*_\beta) - J(\pi_\beta)}_{\text{(a)}} \right],
\end{equation*}
which proves the desired result.
\end{proof}
Proposition~\ref{prop:spi} indicates that when optimizing the behavior policy with Equation~\ref{eqn:optimize_behavior}, we can improve upon the conventional safe-policy improvement guarantee (Equation~\ref{eqn:single_task_guarantee}) with standard single-task offline RL: not only do we improve via $D_\text{CQL}(\pi^*, \pi_\beta^*)$, since, $D_\text{CQL}(\pi^*, \pi_\beta^*) \leq D_\text{CQL}(\pi^*, \pi_\beta)$, which reduces sampling error, but utilizing this policy $\pi^*_\beta$ also allows us to improve on term $(a)$, since Equation~\ref{eqn:eqn4_detailed} optimizes the behavior policy to be close to the learned policy $\pi^*$ and maximizes the learned policy return $J_{\mathcal{D}^\mathrm{eff}_i}(\pi^*)$ on the effective dataset, thus providing us with a high lower bound on $J(\pi^*_\beta)$. We formalize this insight as Lemma~\ref{lemma:a_gt_0} below:

\begin{lemma}
\label{lemma:a_gt_0}
For sufficiently large $\alpha$, $J_{\mathcal{D}^\mathrm{eff}_i}(\pi^*_\beta) \geq J_{\mathcal{D}^\mathrm{eff}_i}(\pi_\beta)$ and thus $(a) \geq 0$.
\end{lemma}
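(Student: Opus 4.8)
The plan is to exploit the joint optimality of the pair $(\pi^*, \pi^*_\beta)$ for the objective $f(\pi, \pi_\beta; \mathcal{D}^\mathrm{eff}_i) = J_{\mathcal{D}^\mathrm{eff}_i}(\pi) - \alpha D(\pi, \pi_\beta)$ by testing it against two cleverly chosen feasible comparison points. First I would observe that the original behavior policy $\pi_\beta$ itself lies in $\Pi_{\text{relabel}}$, since ``no relabeling'' is always an admissible data-sharing choice. Evaluating $f$ at the feasible pair $(\pi_\beta, \pi_\beta)$ and using $D(\pi_\beta, \pi_\beta) = 0$, optimality of $(\pi^*, \pi^*_\beta)$ immediately yields $J_{\mathcal{D}^\mathrm{eff}_i}(\pi^*) - \alpha D(\pi^*, \pi^*_\beta) \geq J_{\mathcal{D}^\mathrm{eff}_i}(\pi_\beta)$. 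This is precisely the inequality already used to establish Proposition~\ref{prop:spi} (Equation~\ref{eqn:inequality_policy} with $\pi' = \pi_\beta$), so it requires no new machinery.

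Next I would control the distortion $d := D(\pi^*, \pi^*_\beta)$ that a large $\alpha$ must induce. Comparing the optimum against the feasible pair $(\pi^*_\beta, \pi^*_\beta)$ gives $J_{\mathcal{D}^\mathrm{eff}_i}(\pi^*) - J_{\mathcal{D}^\mathrm{eff}_i}(\pi^*_\beta) \geq \alpha d$, i.e., in the empirical MDP the learned policy can only outperform its own projected behavior policy. On the other hand, a standard performance-difference argument combined with a Pinsker-type inequality relating $D_{\text{CQL}}$ (a $\chi^2$-type divergence) to total variation yields a reverse bound $J_{\mathcal{D}^\mathrm{eff}_i}(\pi^*) - J_{\mathcal{D}^\mathrm{eff}_i}(\pi^*_\beta) \leq L\sqrt{d}$ with $L = \mathcal{O}(1/(1-\gamma)^2)$. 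Chaining these two facts forces $\sqrt{d} \leq L/\alpha$, so $d = \mathcal{O}(1/\alpha^2)$ and $\pi^* \to \pi^*_\beta$ as $\alpha \to \infty$.

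To conclude, I would apply the Lipschitz bound once more in the reverse direction, $J_{\mathcal{D}^\mathrm{eff}_i}(\pi^*_\beta) \geq J_{\mathcal{D}^\mathrm{eff}_i}(\pi^*) - L\sqrt{d}$, and substitute the first-step inequality to obtain $J_{\mathcal{D}^\mathrm{eff}_i}(\pi^*_\beta) \geq J_{\mathcal{D}^\mathrm{eff}_i}(\pi_\beta) + \alpha d - L\sqrt{d}$. The residual $\alpha d - L\sqrt{d}$ is bounded below by $-\mathcal{O}(1/\alpha)$ and vanishes as $\alpha \to \infty$, which gives $J_{\mathcal{D}^\mathrm{eff}_i}(\pi^*_\beta) \geq J_{\mathcal{D}^\mathrm{eff}_i}(\pi_\beta)$ in the large-$\alpha$ regime; folding the empirical-versus-true sampling error into the same $\approx$ convention used in the proof of Proposition~\ref{prop:spi} then upgrades this to $(a) = J(\pi^*_\beta) - J(\pi_\beta) \geq 0$.

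The main obstacle is exactly this last continuity step. Because $D_{\text{CQL}}$ bounds the return gap only through a square root, the improvement margin $\alpha d$ and the distortion penalty $L\sqrt{d}$ are of the same order, so the crude bounds leave an $\mathcal{O}(1/\alpha)$ slack rather than a clean nonnegativity; obtaining $(a) \geq 0$ therefore genuinely relies on the ``sufficiently large $\alpha$'' hypothesis and on absorbing lower-order terms. A secondary subtlety is that each candidate $\pi_\beta$ induces its own effective dataset $\mathcal{D}^\mathrm{eff}_i$, so strictly the returns above live on different empirical MDPs; I would sidestep this by comparing all returns on the common effective dataset generated by $\pi^*_\beta$ and treating the resulting dataset-mismatch terms as part of the same subsumed sampling error.
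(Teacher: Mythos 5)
Your core machinery is the same as the paper's: you use the optimality comparison $f(\pi^*,\pi^*_\beta;\mathcal{D}^\mathrm{eff}_i)\geq f(\pi_\beta,\pi_\beta;\mathcal{D}^\mathrm{eff}_i)$ to obtain $J_{\mathcal{D}^\mathrm{eff}_i}(\pi^*)-\alpha D(\pi^*,\pi^*_\beta)\geq J_{\mathcal{D}^\mathrm{eff}_i}(\pi_\beta)$, and then a total-variation performance-difference bound $J_{\mathcal{D}^\mathrm{eff}_i}(\pi^*_\beta)\geq J_{\mathcal{D}^\mathrm{eff}_i}(\pi^*)-C\frac{R_{\max}}{1-\gamma}D_{\mathrm{TV}}(\pi^*,\pi^*_\beta)$; the paper chains exactly these two inequalities. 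Where you differ is in how the two error terms are reconciled, and this is where your argument does not close. As you yourself concede in the final paragraph, your chain only yields $(a)\geq \alpha d - L\sqrt{d}\geq -\mathcal{O}(1/\alpha)$, which is strictly weaker than the claimed $(a)\geq 0$: no finite $\alpha$, however large, turns ``$\geq -\mathcal{O}(1/\alpha)$'' into ``$\geq 0$'', and the lemma asserts nonnegativity for sufficiently large \emph{finite} $\alpha$, not in a limit. So the proof is incomplete as written.

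The paper closes this step by \emph{assuming} that $\alpha$ is chosen so that $\frac{CR_{\max}}{1-\gamma}D_{\mathrm{TV}}(\pi^*,\pi^*_\beta)\leq \alpha D(\pi^*,\pi^*_\beta)$, which makes the middle link of the chain go through with no residual, and then treating the degenerate case $D(\pi^*,\pi^*_\beta)=0$ separately (there $\pi^*=\pi^*_\beta$ and the claim is immediate from $J_{\mathcal{D}^\mathrm{eff}_i}(\pi^*)\geq J_{\mathcal{D}^\mathrm{eff}_i}(\pi_\beta)$). Your extra estimate $d=D(\pi^*,\pi^*_\beta)=\mathcal{O}(1/\alpha^2)$, obtained from the second comparison point $(\pi^*_\beta,\pi^*_\beta)$, is a genuinely useful addition that the paper does not have --- but it also exposes why the paper's condition is delicate: since $D_{\mathrm{TV}}\leq\tfrac{1}{2}\sqrt{D_{\mathrm{CQL}}}$, that condition amounts to a \emph{lower} bound $d\geq (L'/\alpha)^2$ of exactly the same order as your upper bound, so it is not automatically satisfied by taking $\alpha$ large and must be imposed as a hypothesis relating $\alpha$ to the solution it induces. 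To match the paper you should either adopt that hypothesis explicitly and split off the $d=0$ edge case, or find an argument that removes the $\mathcal{O}(1/\alpha)$ slack; as it stands your proof establishes a weaker statement than the lemma claims.
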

\begin{proof}
To prove this, we note that using standard difference of returns of two policies, we get the following inequality: $J_{\mathcal{D}^\mathrm{eff}_i}(\pi^*_\beta) \geq J_{\mathcal{D}^\mathrm{eff}_i}(\pi^*) - C \frac{R_{\max}}{1 - \gamma} D_{\mathrm{TV}}(\pi^*, \pi^*_\beta)$. Moreover, from Equation~\ref{eqn:inequality_policy}, we obtain that: $J_{\mathcal{D}^\mathrm{eff}_i}(\pi^*) - \alpha D(\pi^*, \pi^*_\beta) \geq J_{\mathcal{D}^\mathrm{eff}_i}(\pi_\beta)$. So, if $\alpha$ is chosen such that:
\begin{equation}
    \frac{C R_{\max}}{1 - \gamma } D_\mathrm{TV}(\pi^*, \pi_\beta^*) \leq \alpha D(\pi^*, \pi^*_\beta),
\end{equation}
we find that:
\begin{equation*}
    J_{\mathcal{D}^\mathrm{eff}_i}(\pi^*_\beta) \geq J_{\mathcal{D}^\mathrm{eff}_i}(\pi^*) - C \frac{R_{\max}}{1 - \gamma} D_{\mathrm{TV}}(\pi^*, \pi^*_\beta) \geq J_{\mathcal{D}^\mathrm{eff}_i}(\pi^*) - \alpha D(\pi^*, \pi^*_\beta) \geq J_{\mathcal{D}^\mathrm{eff}_i}(\pi_\beta),
\end{equation*}
implying that $(a) \geq 0$. For the edge cases when either $D_\mathrm{TV}(\pi^*, \pi^*_\beta) = 0$ or $D_{\text{CQL}}(\pi^*, \pi^*_\beta) = 0$, we note that $\pi^*(\ba|\bs) = \pi^*_\beta(\ba|\bs)$, which trivially implies that $J_{\mathcal{D}^\mathrm{eff}_i}(\pi^*_\beta) = J_{\mathcal{D}^\mathrm{eff}_i}(\pi^*) \geq J_{\mathcal{D}^\mathrm{eff}_i}(\pi_\beta)$, because $\pi^*$ improves over $\pi_\beta$ on the dataset. Thus, term $(a)$ is positive for large-enough $\alpha$ and the bound in Proposition~\ref{prop:spi} gains from this term additionally.
\end{proof}

Finally, we show that the sampling error term is controlled when utilizing Equation~\ref{eqn:optimize_behavior}. We will show in Lemma~\ref{lemma:sampling_error} that the sampling error in Proposition~\ref{prop:spi} is controlled to be not much bigger than the error just due to variance, since distributional shift is bounded with Equation~\ref{eqn:optimize_behavior}.
\begin{lemma}
\label{lemma:sampling_error}
If $\pi^*$ and $\pi^*_\beta$ obtained from Equation~\ref{eqn:optimize_behavior} satisfy, $D_\text{CQL}(\pi^*, \pi^*_\beta) \leq \varepsilon \ll 1$, then:
\begin{equation}
    (\$) := \mathbb{E}_{\bs \sim d^{\pi^*}_{\mathcal{D}^\mathrm{eff}_i}}\left[\sqrt{\frac{D_{\text{CQL}}(\pi^*, \pi_\beta^*)(\bs) + 1}{|\mathcal{D}^\mathrm{eff}_i(\bs)|}} \right] \leq (1 + \varepsilon)^{\frac{1}{2}} \underbrace{\mathbb{E}_{\bs \sim d^{\pi^*}_{\mathcal{D}^\mathrm{eff}_i}}\left[\sqrt{\frac{1}{|\mathcal{D}^\mathrm{eff}_i(\bs)|}}~ \right]}_{:= \text{sampling error w/o distribution shift}}.
\end{equation}
\end{lemma}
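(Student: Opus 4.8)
The plan is to reduce the claim to a pointwise inequality under the square root and then invoke monotonicity of the expectation. Writing $n(\bs) := |\mathcal{D}^\mathrm{eff}_i(\bs)|$ and $\delta(\bs) := D_\text{CQL}(\pi^*, \pi^*_\beta)(\bs) \geq 0$ for the per-state contribution to the divergence, the integrand factors cleanly as $\sqrt{(\delta(\bs)+1)/n(\bs)} = (\delta(\bs)+1)^{1/2}\, n(\bs)^{-1/2}$. Since $x \mapsto \sqrt{x}$ is increasing, controlling $\delta(\bs)$ suffices: if $\delta(\bs)\le\varepsilon$ on the support of $d^{\pi^*}_{\mathcal{D}^\mathrm{eff}_i}$, then $(\delta(\bs)+1)^{1/2}\le(1+\varepsilon)^{1/2}$ pointwise, so each term is bounded by $(1+\varepsilon)^{1/2}\, n(\bs)^{-1/2}$. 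Taking $\mathbb{E}_{\bs\sim d^{\pi^*}_{\mathcal{D}^\mathrm{eff}_i}}[\cdot]$ of both sides and pulling the constant $(1+\varepsilon)^{1/2}$ out of the expectation yields exactly the claimed bound $(\$)\le(1+\varepsilon)^{1/2}\,\mathbb{E}_{\bs\sim d^{\pi^*}_{\mathcal{D}^\mathrm{eff}_i}}[\, n(\bs)^{-1/2}\,]$.

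The one step that needs care is passing from the hypothesis $D_\text{CQL}(\pi^*,\pi^*_\beta)\le\varepsilon$, which is a state-averaged (aggregate) quantity, to the per-state bound $\delta(\bs)\le\varepsilon$ used above. The cleanest way to make this rigorous without assuming the pointwise bound is to avoid it via concavity of the square root. Factoring out $g(\bs):=n(\bs)^{-1/2}$, the ratio $(\$)/\mathbb{E}[g]$ is a $g$-reweighted average of $(\delta(\bs)+1)^{1/2}$, so Jensen's inequality gives $(\$)/\mathbb{E}[g] \le \left(1 + \mathbb{E}_{\tilde d}[\delta]\right)^{1/2}$, where $\tilde d(\bs)\propto g(\bs)\, d^{\pi^*}_{\mathcal{D}^\mathrm{eff}_i}(\bs)$ is the reweighted state distribution. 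It then remains only to bound $\mathbb{E}_{\tilde d}[\delta]$ by $\varepsilon$, which holds immediately under the pointwise reading of the hypothesis and, more generally, whenever the aggregate divergence in the hypothesis is measured under a distribution that dominates $\tilde d$.

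The main obstacle is thus conceptual rather than computational: the tilt $\tilde d$ shifts mass toward low-count states relative to the raw visitation $d^{\pi^*}_{\mathcal{D}^\mathrm{eff}_i}$, so a purely state-averaged hypothesis need not transfer verbatim. Consistent with the abstraction used throughout this section, where $D_\text{CQL}$ enters as a per-state divergence whose contributions $D_\text{CQL}(\pi,\pi_\beta)(\bs)$ sit under the square root in Equations~\ref{eqn:single_task_guarantee} and \ref{eqn:sampling_error}, I would state the hypothesis as the per-state bound on the relevant support, in which case the proof collapses to the one-line argument of the first paragraph, and present the Jensen route as a remark covering the aggregate reading. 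Either way the qualitative content is identical: because \methodname\ (Equation~\ref{eqn:optimize_behavior}) drives $\pi^*_\beta$ close to $\pi^*$ so that $D_\text{CQL}(\pi^*,\pi^*_\beta)$ is small, the sampling-error term in Proposition~\ref{prop:spi} exceeds the pure-variance term $\mathbb{E}[\, n(\bs)^{-1/2}\,]$ only by the negligible factor $(1+\varepsilon)^{1/2}\approx 1$.
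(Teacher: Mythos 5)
Your route is genuinely different from the paper's, and the difficulty you isolate in your second paragraph --- that the hypothesis $D_\text{CQL}(\pi^*,\pi^*_\beta)\le\varepsilon$ is a state-averaged quantity while your first-paragraph argument needs the per-state bound $D_\text{CQL}(\pi^*,\pi^*_\beta)(\bs)\le\varepsilon$ --- is exactly the point where the two proofs part ways. The paper never attempts a pointwise bound. Writing $d(\bs)=d^{\pi^*}_{\mathcal{D}^\mathrm{eff}_i}(\bs)$, $\delta(\bs)=D_\text{CQL}(\pi^*,\pi^*_\beta)(\bs)$ and $n(\bs)=|\mathcal{D}^\mathrm{eff}_i(\bs)|$, it decomposes the sum as $\sum_{\bs}\sqrt{d(\bs)(\delta(\bs)+1)}\cdot\sqrt{d(\bs)/n(\bs)}$ and applies Cauchy--Schwarz to obtain $(\$)\le\sqrt{\mathbb{E}_{d}[\delta(\bs)+1]}\cdot\sqrt{\mathbb{E}_{d}[1/n(\bs)]}$; the first factor is then controlled by precisely the aggregate identity $\mathbb{E}_{d}[\delta(\bs)]=D_\text{CQL}(\pi^*,\pi^*_\beta)\le\varepsilon$, so the aggregate-to-pointwise transfer that blocks you never arises. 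By contrast, neither of your two routes closes under the hypothesis as stated: the pointwise route requires strengthening the assumption (which changes the lemma rather than proving it), and your Jensen route terminates at $\mathbb{E}_{\tilde d}[\delta]$ for a tilted distribution $\tilde d\propto n(\bs)^{-1/2}d(\bs)$ about which the hypothesis is silent --- as you yourself observe. So this is a genuine gap relative to the stated lemma, even though you have correctly diagnosed where it lies.

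In fairness, the paper's own last step is also delicate: Cauchy--Schwarz delivers $\sqrt{\mathbb{E}_d[1/n(\bs)]}$, not $\mathbb{E}_d[\sqrt{1/n(\bs)}]$, and the inequality the paper invokes to pass between them, $\sqrt{\sum_i w_i/x_i}\le\sum_i w_i/\sqrt{x_i}$, points the wrong way (Jensen for the concave square root gives $\sum_i w_i/\sqrt{x_i}\le\sqrt{\sum_i w_i/x_i}$). The bound that actually follows from the paper's argument is $(\$)\le(1+\varepsilon)^{1/2}\sqrt{\mathbb{E}_d[1/n(\bs)]}$, which is weaker than the displayed right-hand side. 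Your pointwise version, under the strengthened per-state hypothesis, is the one reading under which the displayed right-hand side is actually attained; if you adopt it, state it explicitly as a modification of the hypothesis rather than presenting it as equivalent to the aggregate one.
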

\begin{proof}
This lemma can be proved via a simple application of the Cauchy-Schwarz inequality. We can partition the first term as a sum over dot products of two vectors such that:
\begin{align*}
    (\$) &= \sum_{\bs} \sqrt{d^{\pi^*}_{\mathcal{D}^\mathrm{eff}_i}(\bs) (D_{\text{CQL}}(\pi^*, \pi_\beta^*)(\bs) + 1)} \sqrt{\frac{d^{\pi^*}_{\mathcal{D}^\mathrm{eff}_i}(\bs)}{|\mathcal{D}^\mathrm{eff}_i(\bs)|}}\\
    &\leq \sqrt{\left( \sum_{\bs} d^{\pi^*}_{\mathcal{D}^\mathrm{eff}_i}(\bs) (D_{\text{CQL}}(\pi^*, \pi_\beta^*)(\bs) + 1) \right) \cdot \left( \sum_{\bs} \frac{d^{\pi^*}_{\mathcal{D}^\mathrm{eff}_i}(\bs)}{|\mathcal{D}^\mathrm{eff}_i(\bs)|} \right)}\\
    &= \sqrt{\mathbb{E}_{\bs \sim d^{\pi^*}_{\mathcal{D}^\mathrm{eff}_i}}\left[D_{\text{CQL}}(\pi^*, \pi_\beta^*)(\bs) + 1\right] \mathbb{E}_{\bs \sim d^{\pi^*}_{\mathcal{D}^\mathrm{eff}_i}}\left[ \frac{1}{|\mathcal{D}^\mathrm{eff}_i(\bs)|} \right]} \leq (1 + \varepsilon)^{0.5} \mathbb{E}_{\bs \sim d^{\pi^*}_{\mathcal{D}^\mathrm{eff}_i}}\left[\sqrt{\frac{1}{|\mathcal{D}^\mathrm{eff}_i(\bs)|}}~ \right],
\end{align*}
where we note that $\mathbb{E}_{\bs \sim d^{\pi^*}_{\mathcal{D}^\mathrm{eff}_i}}\left[D_{\text{CQL}}(\pi^*, \pi_\beta^*)(\bs)\right] = D_\text{CQL}(\pi^*, \pi^*_\beta) \leq \varepsilon$ (based on the given information in the Lemma) and that $\sqrt{\sum_i w_i \frac{1}{x_i}} \leq \sum_i w_i \frac{1}{\sqrt{x_i}}$ for $x_i, w_i > 0$ and $\sum_i w_i = 1$, via Jensen's inequality for concave functions.
\end{proof}

\textbf{To summarize,} combining Lemmas~\ref{lemma:a_gt_0} and \ref{lemma:sampling_error} with Proposition~\ref{prop:spi}, we conclude that utilizing Equation~\ref{eqn:optimize_behavior} controls the increase in sampling error due to distributional shift, and provides improvement guarantees on the learned policy beyond the behavior policy of the original dataset. We also briefly now discuss the comparison between \methodname\ and complete data sharing. Complete data sharing would try to reduce sampling error by increasing $|\mathcal{D}^\mathrm{eff}_i(\bs)|$, but then it can also increase distributional shift, $D_\text{CQL}(\pi^*, \pi^*_\beta)$ as discussed in Section~\ref{sec:analysis}. On the other hand, CDS increases the dataset size while also controlling for distributional shift (as we discussed in the analysis above), making it enjoy the benefits of complete data sharing and avoiding its pitfalls, intuitively. On the other hand, no data sharing will just incur high sampling error due to limited dataset size. 

\subsection{From Equation~\ref{eqn:optimize_behavior} to Practical \methodname\ (Equation~\ref{eqn:method})}
\label{sec:practical_cds}
The goal of our practical algorithm is to convert Equation~\ref{eqn:optimize_behavior} to a practical algorithm while retaining the policy improvement guarantees derived in Proposition~\ref{prop:spi}. Since our algorithm does not utilize any estimator for dataset counts $|\mathcal{D}^\mathrm{eff}_i(\bs)|$, and since we operate in a continuous state-action space, our goal is to retain the guarantees of increased return of $\pi^*_\beta$, while also avoiding sampling error. 

With this goal, we first need to relax the state-distribution in Equation~\ref{eqn:optimize_behavior}: while both $J_{\mathcal{D}^\mathrm{eff}_i}(\pi)$ and $D_\text{CQL}(\pi, \pi_\beta)$ are computed as expectations under the marginal state-distribution of policy $\pi(\ba|\bs)$ on the MDP defined by the dataset $\mathcal{D}^\mathrm{eff}_i$, for deriving a practical method we relax the state distribution to use the dataset state-distribution $d^{\pi^*_\beta}$ and rewrite the objective in accordance with most practical implementations of actor-critic algorithms~\citep{degris2012off,Abdolmaleki2018MaximumAP,haarnoja2018soft,fujimoto2018addressing,lillicrap2015continuous} below:
\begin{equation}
\label{eqn:practical_eqn4}
    \text{(Practical Equation~\ref{eqn:optimize_behavior})}~~~~ \max_{\pi} \max_{\pi_\beta \in \Pi_\text{relabel}}~~ \mathbb{E}_{\bs \sim \mathcal{D}^{\mathrm{eff}}_i}[\mathbb{E}_{\ba \sim \pi(\ba|\bs)}[Q(\bs, \ba)] - \alpha D(\pi(\cdot|\bs), \pi_\beta(\cdot|\bs))]
\end{equation}
This practical approximation in Equation~\ref{eqn:practical_eqn4} is even more justified with conservative RL algorithms when a large $\alpha$ is used, since a larger $\alpha$ implies a smaller value for $D(\pi^*, \pi_\beta^*)$ found by Equation~\ref{eqn:optimize_behavior}, which in turn means that state-distributions $d^{\pi_\beta^*}$ and $d^{\pi^*}$ are close to each other~\citep{schulman2015trust}. Thus, our policy improvement objective optimizes the policies $\pi$ and $\pi_\beta$ by maximizing the conservative Q-function: $\hat{Q}^\pi(\bs, \ba) = Q(\bs, \ba) - \alpha \left(\frac{\pi(\ba|\bs)}{\pi_\beta(\ba|\bs)} - 1 \right)$, that appears inside the expectation in Equation~\ref{eqn:practical_eqn4}. While optimizing the policy $\pi$ with respect to this conservative Q-function $\hat{Q}^\pi(\bs, \ba)$ is equivalent to a standard policy improvement update utilized by most actor-critic methods~\citep{fujimoto2018addressing,haarnoja2018soft,kumar2020conservative}, we can optimize $\hat{Q}^\pi(\bs, \ba)$ with respect to $\pi_\beta \in \Pi_{\text{relabel}}$ by relabeling only those transitions $(\bs, \ba, r'_i, \bs') \in \mathcal{D}_{j \rightarrow i}$ that increase the expected conservative Q-value $\mathbb{E}_{\bs \sim \mathcal{D}^\mathrm{eff}_i}\left[\mathbb{E}_{\ba \sim \pi_\beta(\cdot|\bs)}\left[\hat{Q}^\pi(\bs, \ba)\right] \right]$. Note that we relaxed the expectation $\ba \sim \pi(\ba|\bs)$ to $\ba \sim \pi_\beta(\ba|\bs)$ in this expectation, which can be done upto a lower-bound of the objective in Equation~\ref{eqn:practical_eqn4} for a large $\alpha$, since the resulting policies $\pi$ and $\pi_\beta$ are close to each other.  

The last step in our practical algorithm is to modify the solution of Equation~\ref{eqn:practical_eqn4} to still retain the benefits of reduced sampling error as discussed in Proposition~\ref{prop:spi}. To do so, we want to relabel as many points as possible, thus increasing $|\mathcal{D}^\mathrm{eff}_i(\bs)|$, which leads to reduced sampling error. Since quantifying $|\mathcal{D}^\mathrm{eff}_i(\bs)|$ in continuous state-action spaces will require additional machinery such as density-models, we avoid these for the sake of simplicity, and instead choose to relabel every datapoint $(\bs, \ba) \in \mathcal{D}_{j \rightarrow i}$ that satisfies $Q^\pi(\bs, \ba; i) \geq \mathbb{E}_{\bs, \ba \sim \mathcal{D}_i}[\hat{Q}^\pi(\bs, \ba; i)] \geq 0$ to task $i$. These datapoints definitely increase the conservative Q-value and hence increase the objective in Equation~\ref{eqn:practical_eqn4} (though do not globally maximize it), while also enjoying properties of reduced sampling error (Proposition~\ref{prop:spi}). This discussion motivates our practical algorithm in Equation~\ref{eqn:method}.  

\section{Experimental details}
\label{app:details}

In this section, we provide the training details of CDS in Appendix~\ref{app:training_details} and also include the details on the environment and datasets that we use for the evaluation in Appendix~\ref{app:env_data_details}. Finally, we include the discussion on the compute information in Appendix~\ref{app:compute_details}. \arxiv{We also compare CDS to an offline RL with pretrained representations from multi-task datasets method~\citep{yang2021representation}.}

\subsection{Training details}
\label{app:training_details}
Our practical implementation of \methodname\ optimizes the following objectives for training the critic and the policy:
\vspace*{-5pt}
\begin{small}
\begin{align*}
    \hat{Q}^{k+1} \leftarrow& \arg\min_{\hat{Q}} \mathbb{E}_{i\sim[N]}\left[\beta\left(\mathbb{E}_{j \sim[N]}\left[\mathbb{E}_{\bs \sim \mathcal{D}_j, \ba \sim \mu(\cdot|\bs,i)}\left[w_{\mathrm{\methodname}}(\bs, \ba; j \rightarrow i)\hat{Q}(\bs,\ba,i)\right]\right.\right.\right.\\
    &\left.\left.\left.- \mathbb{E}_{\bs, \ba \sim \mathcal{D}_j}\left[w_{\mathrm{\methodname}}(\bs, \ba; j \rightarrow i)\hat{Q}(\bs,\ba, i)\right]\right]\right)\right.\\
    &\left. + \frac{1}{2}\mathbb{E}_{j\sim[N],(\bs, \ba, \bs') \sim \mathcal{D}_j}\left[w_{\mathrm{\methodname}}(\bs, \ba; j \rightarrow i) \left(\hat{Q}(\bs, \ba, i) - \widehat{\bellman}^\policy \hat{Q}^k(\bs, \ba, i)\right)^2 \right]\right],
\end{align*}
\end{small}
\vspace*{-19pt}
and
\[
\policy \leftarrow \arg \max_{\policy'} \ \mathbb{E}_{i \sim [N]}\left[\mathbb{E}_{j\sim[N],\bs \sim \mathcal{D}_j, \ba \sim \policy'(\cdot | \bs, i)} \left[ w_{\mathrm{\methodname}}(\bs, \ba; j \rightarrow i)\hat{Q}^\policy (\bs, \ba, i) \right]\right],
\]
where $\beta$ is the coefficient of the CQL penalty on distribution shift, $\mu$ is a wide sampling distribution as in CQL and $\widehat{\bellman}$ is the sample-based Bellman operator.

To compute the relabeling weight $w_{\mathrm{\methodname}}(\bs, \ba; j \rightarrow i) := \sigma \left(\frac{\Delta(\bs, \ba; j \rightarrow i)}{\tau} \right)$, we need to pick the value of the temperature term $\tau$. Instead of tuning $\tau$ manually, we follow the the adaptive temperature scaling scheme from \citep{kumar2020discor}. Specifically, we compute an exponential running average of $\Delta(\bs, \ba; j \rightarrow i)$ with decay $0.995$ for each task and use it as $\tau$. We additionally clip the adaptive temperature term with a minimum and maximum threshold, which we tune manually. For multi-task halfcheetah, walker2d and ant, we clip the adaptive temperature such that it lies within $[10,\infty]$, $[5,\infty]$ and $[10,25]$ respectively. For the multi-task Meta-Wold experiment, we use $[1, 50]$ for the clipping. For multi-task Antmaze, we used a range of $[10, \infty]$ for all the domains. We do not clip the temperature term on vision-based domains.

For state-based experiments, we use a stratified batch with $128$ transitions for each task for the critic and policy learning. For each task $i$, we sample $64$ transitions from $\mathcal{D}_i$ and another $64$ transitions from $\cup_{j \neq i} \mathcal{D}_{j \rightarrow i}$, i.e. the relabeled datasets of all the other tasks. When computing $\Delta(\bs, \ba; j \rightarrow i)$, we only apply the weight to relabeled data on multi-task Meta-World environments and multi-task vision-based robotic manipulation tasks while also applying the weight to the original data drawn from $\mathcal{D}_i$ with 50\% chance for each task $i\in[N]$ in the remaining domains.

We use CQL~\citep{kumar2020conservative} as the base offline RL algorithm. On state-based experiments, we mostly follow the hyperparameters provided in prior work~\citep{kumar2020conservative}. One exception is that on the multi-task ant domain, we set $\beta = 5.0$ and on the other two locomotion environments and the multi-task Meta-World domain, we use $\beta = 1.0$. On multi-task AntMaze, we use the Lagrange version of CQL, where the multiplier $\beta$ is automatically tuned against a pre-specific constraint value on the CQL loss equal to $\tau = 5.0$. We use a policy learning rate $1e-4$ and a critic learning rate $3e-4$ as in \citep{kumar2020conservative}. On the vision-based environment, instead of using the direct CQL algorithm, we follow \citep{chebotar2021actionable} and sample unseen actions according to the soft-max distritbution of the Q-values and set its Q target value to $0$. This algorithm can be viewed the version of CQL with $\beta=1.0$ in Eq.1 in \citep{kumar2020conservative}, i.e. removing the term of negative expected Q-values on the dataset. We follow the other hyperparameters from prior work~\citep{kalashnikov2018scalable,chebotar2021actionable,kalashnikov2021mt}.

For the choice architectures, in the domains with low-dimensional state inputs, we use 3-layer feedforward neural networks with $256$ hidden units for both the Q-networks and the policy. We append a one-hot task vector to the state of each environment. For the vision-based experiment, our Q-network architecture follows from multi-headed convolutional networks used in MT-Opt~\citep{kalashnikov2021mt}. For the observation input, we use images with dimension $472 \times 472 \times 3$ along with additional state features $(g_\text{status}, g_\text{height})$ as well as the one-hot task vector as in \citep{kalashnikov2021mt}. For the action input, we use Cartesian
space control of the end-effector of the robot in 4D space (3D position and azimuth angle) along with two discrete actions for
opening/closing the gripper and terminating the episode respectively. More details can be found in \citep{kalashnikov2018scalable,kalashnikov2021mt}.

\subsection{Environment and dataset details}
\label{app:env_data_details}

In this subsection, we discuss the details of how we set up the multi-task environment and how we collect the offline datasets. We want to acknowledge that all datasets with state inputs use the MIT License.

\textbf{Multi-task locomotion domains.} We construct the environment by changing the reward function in \citep{brockman2016openai}. On the halfcheetah environment, we follow \citep{yu2020mopo} and set the reward functions of task \texttt{run forward}, \texttt{run backward} and \texttt{jump} as $r(s, a) = \max\{v_x, 3\} - 0.1*\|a\|_2^2$, $r(s, a) = -\max\{v_x, 3\} - 0.1*\|a\|_2^2$ and $r(s, a) = - 0.1*\|a\|_2^2 + 15*(z - \text{init z})$ respectively where $v_x$ denotes the velocity along the x-axis and $z$ denotes the z-position of the half-cheetah and $\text{init z}$ denotes the initial z-position. Similarly, on walker2d, the reward functions of the three tasks are $r(s, a) = v_x - 0.001*\|a\|_2^2$, $r(s, a) = -v_x - 0.001*\|a\|_2^2$ and $r(s, a) = - \|v_x\| - 0.001*\|a\|_2^2 + 10*(z - \text{init z})$ respectively. Finally, on ant, the reward functions of the three tasks are $r(s, a) = v_x - 0.5*\|a\|_2^2 - 0.005*\text{contact-cost}$, $r(s, a) = -v_x - 0.5*\|a\|_2^2 - 0.005*\text{contact-cost}$ and $r(s, a) = - \|v_x\| - 0.5*\|a\|_2^2 - 0.005*\text{contact-cost} + 10*(z - \text{init z})$.

On each of the multi-task locomotion environment, we train each task with SAC~\citep{haarnoja2018soft} for 500 epochs. For medium-replay datasets, we take the whole replay buffer after the online SAC is trained for 100 epochs. For medium datasets, we take the online single-task SAC policy after 100 epochs and collect 500 trajectories with the medium-level policy. For expert datasets, we take the final online SAC policy and collect 5 trajectories with it for walker2d and halfcheetah and 20 trajectories for ant.

\textbf{Meta-World domains.} We take the \texttt{door open}, \texttt{door close}, \texttt{drawer open} and \texttt{drawer close} environments from the open-sourced Meta-World~\citep{yu2020metaworld} repo\footnote{The Meta-World environment can be found at the public repo \url{https://github.com/rlworkgroup/metaworld}}. We put both the door and the drawer on the same scene to make sure the state space of all four tasks are shared. For offline training, we use sparse rewards for each task by replacing the dense reward defined in Meta-World with the success condition defined in the public repo. Therefore, each task gets a reward of 1 if the task is fully completed and 0 otherwise.

For generating the offline datasets, we train each task with online SAC using the dense reward defined in Meta-World for 500 epochs. For medium-replay datasets, we take the whole replay buffer of the online SAC until 150 epochs. For the expert datasets, we run the final online SAC policy to collect 10 trajectories.

\textbf{AntMaze domains.} We take the \texttt{antmaze-medium-play} and \texttt{antmaze-large-play} datasets from D4RL~\citep{fu2020d4rl} and convert the datasets into multi-task datasets in two ways. In the undirected version of these tasks, we split the dataset randomly into equal sized partitions, and then assign each partition to a particular randomly chosen task. Thus, the task data observed in the data for each task is largely unsuccessful for the particular task it is assigned to and effective data sharing is essential for obtaining good performance. The second setting is the directed data setting where a trajectory in the dataset is marked to belong to the task corresponding to the actual end goal of the trajectory. A sparse reward equal to +1 is provided to an agent when the current state reaches within a 0.5 radius of the task goal as was used default by \citet{fu2020d4rl}. 

\textbf{Vision-based robotic manipulation domains.} Following MT-Opt~\citep{kalashnikov2021mt}, we use sparse rewards for each task, i.e. reward 1 for success episodes and 0 otherwise. We define successes using the success detectors defined in \citep{kalashnikov2021mt}. 
To collect data for vision-based experiments, we train a policy for each task individually by running QT-Opt~\citep{kalashnikov2018scalable} with default hyperparameters until the task reaches 40\% success rate for picking skills and 80\% success rate for placing skills. We take the whole replay buffer of each task and combine all of such replay buffers to form the multi-task offline dataset with total 100K episodes where each episode has 25 transitions.

\subsection{Computation Complexity}
\label{app:compute_details}

For all the state-based experiments, we train CDS on a single NVIDIA GeForce RTX 2080 Ti for one day. For the image-based robotic manipulation experiments, we train it on 16 TPUs for three days.

\arxiv{\section{Visualizations, Comparisons and Additional Experiments}
\label{app:additional_exp}

In this section, we perform diagnostic and ablation experiments to: \textbf{(1)} understand the efficacy of CDS when applied with other base offline RL algorithms, such as BRAC~\citep{wu2019behavior}, \textbf{(2)} visualize the weights learned by CDS to understand if the weighting scheme induced by CDS corresponds to what we would intuitively expect on different tasks, and \textbf{(3)} compare CDS to a prior approach that performs representation learning from offline multi-task datasets and then runs vanilla multi-task RL algorithm on top of the learned representations. We discuss these experiments next.

\subsection{Applying CDS with BRAC~\citep{wu2019behavior}, A Policy-Constraint Offline RL Algorithm}
\label{app:brac_results}

We implemented CDS on top of BRAC which is different from CQL that penalizes Q-functions. BRAC computes the divergence $D(\pi, \pi_\beta)$ in Equation~\ref{eqn:generic_offline_rl} explicitly and penalizes the reward function $r(\bs, \ba)$ with this value in the Bellman backups. To apply CDS to BRAC, we need to compute a conservative estimate of the Q-value as discussed in Section~\ref{sec:complete_cds}. While the Q-function from CQL directly provides us with this conservative estimate, BRAC does not directly learn a conservative Q-function estimator. Therefore,  for BRAC, we compute this conservative estimate by explicitly subtracting KL divergence between the learned policy $\pi(a|s)$ and the behavior policy $\pi^\beta$ on state-action tuples $(s,a)$ from the learned Q-function’s prediction. Formally, this means that we utilize $\hat{Q}(s, a) := Q(s,a) - \alpha \mathrm{D}_\text{KL}(\pi(a|s),\pi^\beta(a|s))$ as our conservative Q-value estimate for BRAC. Given these conservative Q-value estimate, CDS weights can be computed directly using Equation~\ref{eqn:method}. 

\begin{table*}[h]
\centering
\vspace*{0.1cm}
\scriptsize
\resizebox{\textwidth}{!}{\begin{tabular}{l|l|r|r|r}
\toprule
\textbf{Environment} & \textbf{Tasks / Dataset type} & \textbf{BRAC + \methodname\ (ours)} & \textbf{BRAC + No Sharing} & \textbf{BRAC + Sharing All}\\ \midrule
& door open / medium-replay & \textbf{44.0\%}$\pm$3.0\% & 35.0\%$\pm$25.9\% & 38.0\%$\pm$2.2\%\\
& door close / expert & \textbf{32.5\%} $\pm$ 5.0\% & 5.0\% $\pm$ 8.6\% & 8.6\% $\pm$ 3.4\% \\
Meta-World~\citep{yu2020metaworld}& drawer open / expert & \textbf{28.5\%}$\pm$3.5\% & 21.8\%$\pm$5.6\% & 0.0\% $\pm$ 0.0\%\\
& drawer close / medium-replay & 100\%$\pm$0.0\% & 100.0\%$\pm$0.0\% & 99.0\%$\pm$0.7\%\\
& \CC \textbf{average} & \CC \textbf{52.5\%}$\pm$7.4\% & \CC 22.5\%$\pm$13.3\% & \CC 40.0\%$\pm$5.0\%\\
\bottomrule
\end{tabular}}
\vspace{-0.2cm}
\caption{\footnotesize \arxiv{Applying CDS on top of BRAC. Note that CDS + BRAC imrpoves over both BRAC + Sharing All and BRAC + No sharing, indicating that CDS is effective over other offline RL algorithms such as BRAC as well. The $\pm$ values indicate the value of the standard deviation of runs, and results are averaged over three seeds.}
}
\label{tbl:brac_comparison}
\normalsize
\vspace{-0.3cm}
\end{table*}

We evaluated BRAC + CDS on the Meta-World tasks and compared it to BRAC + Sharing All and BRAC + No Sharing. We present the results in Table~\ref{tbl:brac_comparison}. We use $\pm$ to denote the 95\%-confidence interval. As observed below, BRAC + CDS significantly outperforms BRAC with Sharing All and BRAC with No sharing. This indicates that CDS is effective on top of BRAC. 

\subsection{Analyzing CDS weights for Different Scenarios} 
\label{app:cds_vis}

Next, to understand if the weights assigned by CDS align with our expectation for which transitions should be shared between tasks, we perform diagnostic analysis studies on the weights learned by CDS on the Meta-World and Antmaze domains.

\textbf{On the Meta-World environment}, we would expect that for a given target task, say Drawer Close, transitions from a task that involves a different object (door) and a different skill (open) would not be as useful for learning. To understand if CDS weights reflect this expectation, we compare the average CDS weights on transitions from all the other tasks to two target tasks, Door Open and Drawer Close, respectively and present the results in Table~\ref{tbl:cds_weights}. We sort the CDS weights in the descending order. As shown, indeed CDS assigns higher weights to more related tasks and thus shares data from those tasks. In particular, the CDS weights for relabeling data from the task that handles the same object as the target task are much higher than the weights for tasks that consider a different object. 

\begin{table}[h]
\centering
\footnotesize
\begin{tabular}{l|r}
\toprule
{\textbf{Relabeling Direction}}                 & \textbf{CDS weight} \\
\midrule
{{door close $\rightarrow$ door open}}   & 0.46       \\
{{drawer open $\rightarrow$ door open}}  & 0.10       \\
{{Drawer close $\rightarrow$ door open}} & 0.02       \\ \midrule
{drawer open $\rightarrow$ drawer close}                     & 0.35       \\
{door open $\rightarrow$ drawer close}                       & 0.26       \\
{door close $\rightarrow$ drawer close}                      & 0.22      \\ 
\bottomrule
\end{tabular}
\vspace{0.2cm}
\caption{\footnotesize \arxiv{On the Meta-World domain, we visualize the CDS weights of data relabeled from other tasks to the two target tasks door open and drawer close shown in the second row and third row respectively. We sort the CDS weights for relabeled tasks to a particular target task in the descending order. As shown in the table, CDS upweights tasks that are more related to the target task, e.g. manipulating the same object.}
}
\vspace{-0.3cm}
\label{tbl:cds_weights}
\end{table}

For example, when relabeling to the target task Door Open, datapoints from task Door Close are assigned with much higher weights than those from either task Drawer Open or task Drawer Close. This suggests that CDS filters the irrelevant transitions for learning a given task.

\textbf{On the AntMaze-large environment}, with undirected data, we visualize the CDS weight for the various tasks (goals) in the form of a heatmap and present the results in Figure~\ref{fig:antmaze_vis}. To generate this plot, we sample a set of state-action pairs from the entire dataset for all tasks, and then plot the weights assigned by CDS as the color of the point marker at the $(x, y)$ locations of these state-action pairs in the maze. Each plot computes the CDS weight corresponding to the target task (goal) indicated by the red $\times$ in the plot. As can be seen in Figure~\ref{fig:antmaze_vis}, CDS assigns higher weights to transitions from nearby goals as compared to transitions from farther away goals. This matches our expectation: transitions from nearby $(x, y)$ locations are likely to be the most useful in learning a particular target task and CDS chooses to share these transitions to the target task.}

\arxiv{\subsection{Comparison of CDS with Other Alternatives to Data Sharing: Utilizing Multi-Task Datasets for Learning Pre-Trained Representations} 
\label{app:pretrain_comparison}

\begin{figure}[t]
    \centering
    \includegraphics[width=0.75\textwidth]{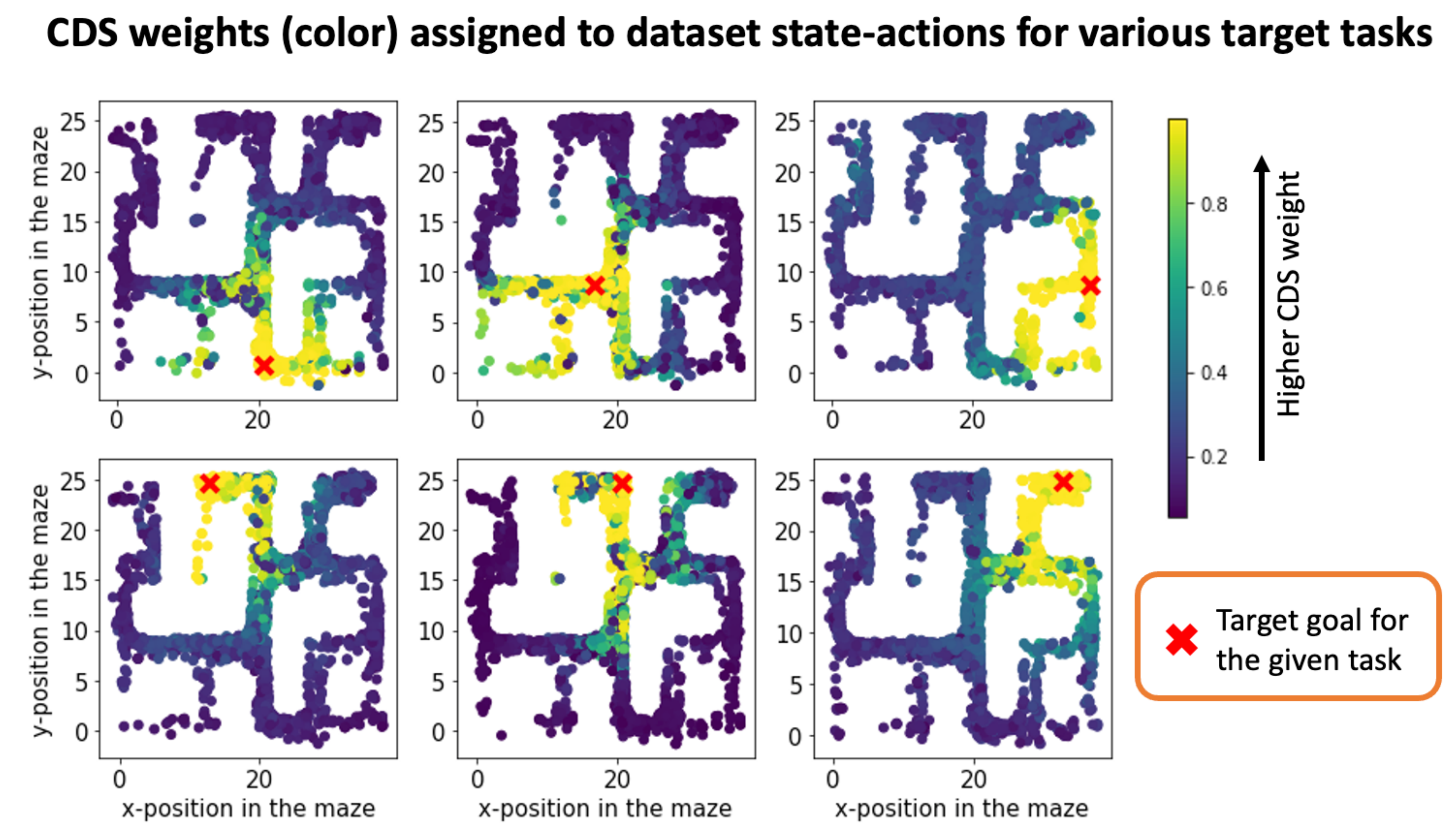}
    \vspace{-0.2cm}
    \caption{\footnotesize A visualization of the weights assigned by \methodname\ to various transitions in the antmaze dataset for six target goals (indicated by \textcolor{red}{$\times$} clustered by their spatial location. Note that CDS up-weights transitions spatially close to the target goal (indicated in the brighter yellow color), matching our expectation.}
    \label{fig:antmaze_vis}
    \vspace{-0.4cm}
\end{figure}

Finally, we aim to empirically verify how other alternatives to data sharing perform on multi-task offline RL problems. One simple approach to utilize data from other tasks is to use this data to learn low-dimensional representations that capture meaningful information about the environment initially in a pre-training phase and then utilize these representations for improved multi-task RL without any specialized data sharing schemes. To assess the efficacy of this alternate approach of using multi-task offline data, in Table~\ref{tbl:pretrain_comparison}, we performed an experiment on the Meta-World domain that first utilizes the data from all the tasks to learn a shared representation using the best method, ACL~\citep{yang2021representation} and then runs standard offline multi-task RL on top of this representation. We denote the method as \textbf{Offline Pretraining}. We include the average task success rates of all tasks in the table below. While the representation learning approach improves over standard multi-task RL without representation learning (\textbf{No Sharing}) consistent with the findings in \citep{yang2021representation}, we still find that CDS with no representation learning outperforms this representation learning approach by a large margin on multi-task performance, which suggests that conservative data sharing is more important than pure pretrained representation from multi-task datasets in the offline multi-task setting. We finally remark that in principle, we could also utilize representation learning approaches in conjunction with data sharing strategies and systematically characterizing this class of hybrid approaches is a topic of future work.}

\begin{table*}[t!]
\centering
\vspace*{0.1cm}
\scriptsize
\resizebox{\textwidth}{!}{\begin{tabular}{l|l|r|r|r}
\toprule
\textbf{Environment} & \textbf{Tasks / Dataset type} & \textbf{\methodname\ (ours)} & \textbf{No Sharing} & \textbf{Offline Petraining~\citep{yang2021representation}}\\ \midrule
& door open / medium-replay & \textbf{58.4\%}$\pm$9.3\% & 4.0\%$\pm$6.1\% & 48.0\%$\pm$40.0\%\\
Meta-World~\citep{yu2020metaworld}& drawer open / expert & \textbf{57.9\%}$\pm$16.2\% & 16.0\%$\pm$17.5\% & 1.3\% $\pm$ 1.4\%\\
& drawer close / medium-replay & 98.8\%$\pm$0.7\% & 99.0\%$\pm$0.7\% & 96.0\%$\pm$0.9\%\\
& \CC \textbf{average} & \CC \textbf{70.1\%}$\pm$8.1\% & \CC 33.4\%$\pm$8.3\% & \CC 38.7\%$\pm$11.1\%\\
\bottomrule
\end{tabular}}
\vspace{-0.2cm}
\caption{\footnotesize \arxiv{Comparison between CDS and Offline Pretraining~\citep{yang2021representation} that pretrains the representation from the multi-task offline data and then runs multi-task offline RL on top of the learned representation on the Meta-World domain. Numbers are averaged across 6 seeds, $\pm$ the 95$\%$-confidence interval. \methodname\ significantly outperforms Offline Pretraining.}
}
\label{tbl:pretrain_comparison}
\normalsize
\vspace{-0.3cm}
\end{table*}

\end{document}